\newcommand{\defeq}{\mathrel{\mathop:}=}
\DeclareMathOperator*{\argmin}{arg\,min}
 \definecolor{changecolor}{rgb}{0,0,1}  
\begin{document}
\sloppy
\title{PEdger++: Practical Edge Detection via Assembling Cross Information}




\author{Yuanbin Fu         \and
        Liang Li      \and
        Xiaojie Guo               
}

\institute{\Letter\ Xiaojie Guo \at
             \email{xj.max.guo@gmail.com}
             \and
           Yuanbin Fu \at 
              \email{yuanbinfu@tju.edu.cn}            
           \and
           Liang Li \at
           \email{liangli@tju.edu.cn}
           \and
             The authors are from the College of Intelligence and Computing, Tianjin University, Tianjin 300350, China. 
             This work was supported by the National Natural Science Foundation of China under Grants No.62072327 and 62372251.
}

\date{Received: date / Accepted: date}

\maketitle
\abstract{
Edge detection serves as a critical foundation for numerous computer vision applications, including object detection, semantic segmentation, and image editing, by extracting essential structural cues that define object boundaries and salient edges. To be viable for broad deployment across devices with varying computational capacities, edge detectors shall balance high accuracy with low computational complexity. While deep learning has evidently improved accuracy, they often suffer from high computational costs, limiting their applicability on resource-constrained devices.  This paper addresses the challenge of achieving that balance: \textit{i.e.}, {how to efficiently capture discriminative features without relying on large-size and sophisticated models}. We propose PEdger++, a collaborative learning framework designed to reduce computational costs and model sizes while improving edge detection accuracy. The core principle of our PEdger++ is that cross-information derived from  heterogeneous  architectures, diverse training moments, and multiple parameter samplings, is beneficial to enhance learning from an ensemble perspective. Extensive  experimental results on the BSDS500, NYUD and Multicue datasets demonstrate the effectiveness of our approach, both quantitatively and qualitatively, showing clear improvements over existing methods.  We also provide multiple versions of the model with varying computational requirements, highlighting PEdger++'s adaptability with respect to different resource constraints. Codes are accessible at \url{https://github.com/ForawardStar/EdgeDetectionviaPEdgerPlus/}.}


\keywords{Edge Detection, Collaborative Learning, Bayesian Neural Network, Model Ensemble}

\section{Introduction}\label{sec:introduction}

%
%
%
%

In the field of computer vision,  edges perform as pivotal cues that reflect the structural essence of natural images, forming the basis for many high-level tasks such as object detection \cite{faster-rcnn}, semantic segmentation \cite{JSENet}, and image editing \cite{editing}, to name just a few. Ideally, a practical edge detector should be versatile enough to operate efficiently across various devices, including those with limited computational resources, \textit{e.g.}, mobile phones and tablets. Consequently, achieving high accuracy across varying computational complexities is critical for delivering optimal user experiences.

However, previous literature struggles to harmonize the conflicting demands of accuracy and efficiency in edge detection. Early edge extractors rely on shallow information, like image gradients  \cite{sobel,Canny} and hand-crafted features  \cite{pb,gpu-ucm,se}, resulting in poor accuracy. With the emergence of deep-learning techniques,  a number of methods \cite{HED,BDCN_cvpr,RCF,EDTER,treasure,Diffusionedge,MuGE} have been developed to leverage high-level/deep features. While these approaches significantly improve accuracy, their computational complexity becomes prohibitively expensive, limiting their applicability in real-time scenarios and resource-constrained devices. More recently, PiDiNet \cite{PiDiNet_TPAMI} has significantly accelerated the process, but it still lags behind state-of-the-art methods in terms of accuracy.  Besides, a critical issue often overlooked by existing deep-learning methods is that their network parameters are deterministic: network parameters are fixed after training, lacking the ability to handle epistemic uncertainty (see Fig.~\ref{vis} for illustration). To be more specific, as noted by \cite{uncertainties,bayesian1,bayesian2,subnetwork,bayesian3,bayesian4}, this uncertainty reflects the model's limited knowledge about input distributions, particularly those underrepresented or absent in the training data. High epistemic uncertainty often manifests as inconsistent or unreliable predictions on ambiguous inputs, severely degrading the detection accuracy. As a consequence, achieving a practical detector necessitates addressing both computational complexity and epistemic/model uncertainty to ensure reliable high accuracy across diverse scenarios and devices.

    

\begin{figure}[t]

    \includegraphics[width=0.24\linewidth, frame]{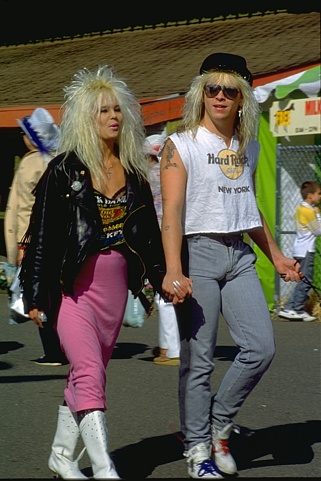}
    \includegraphics[width=0.24\linewidth, frame]{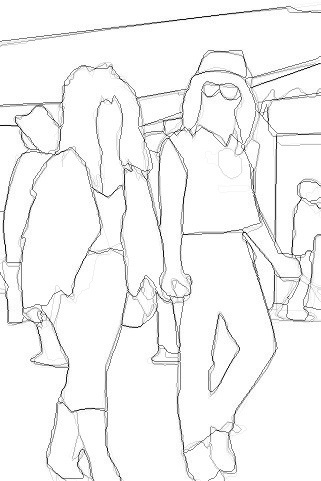}
    \includegraphics[width=0.24\linewidth, frame]{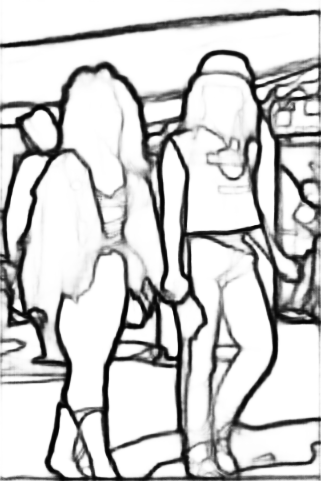}
    \includegraphics[width=0.24\linewidth, frame]{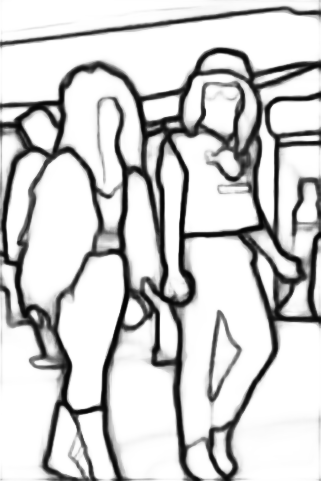}

    \includegraphics[width=0.24\linewidth, frame]{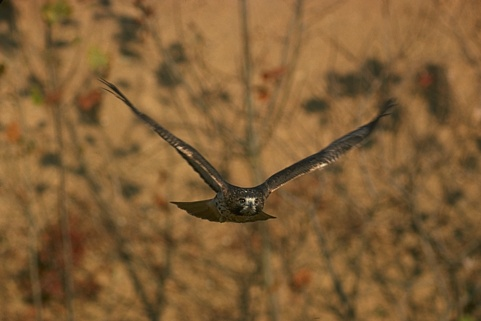}
    \includegraphics[width=0.24\linewidth, frame]{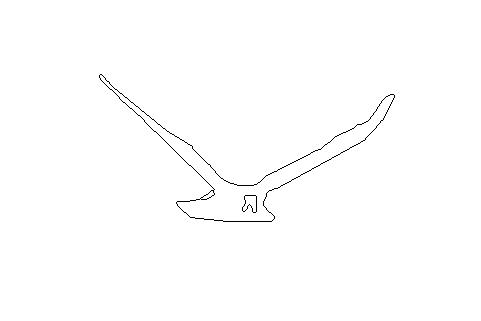}
    \includegraphics[width=0.24\linewidth, frame]{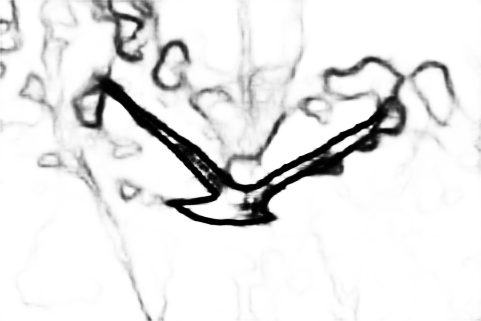}
    \includegraphics[width=0.24\linewidth, frame]{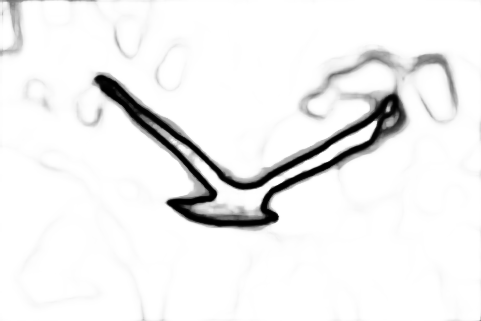}

    \quad\quad Input \quad\quad\quad\;\quad GT \quad\quad\quad Prediction 1 \quad\quad Prediction 2
    \vspace{3pt}
    \caption{Predictions by two different parameter samplings.}
    \label{vis}
\end{figure}

\begin{figure*}[t]
\centering
    \subfloat[Methods w/ pre-training on ImageNet]{\includegraphics[width=0.29\linewidth]{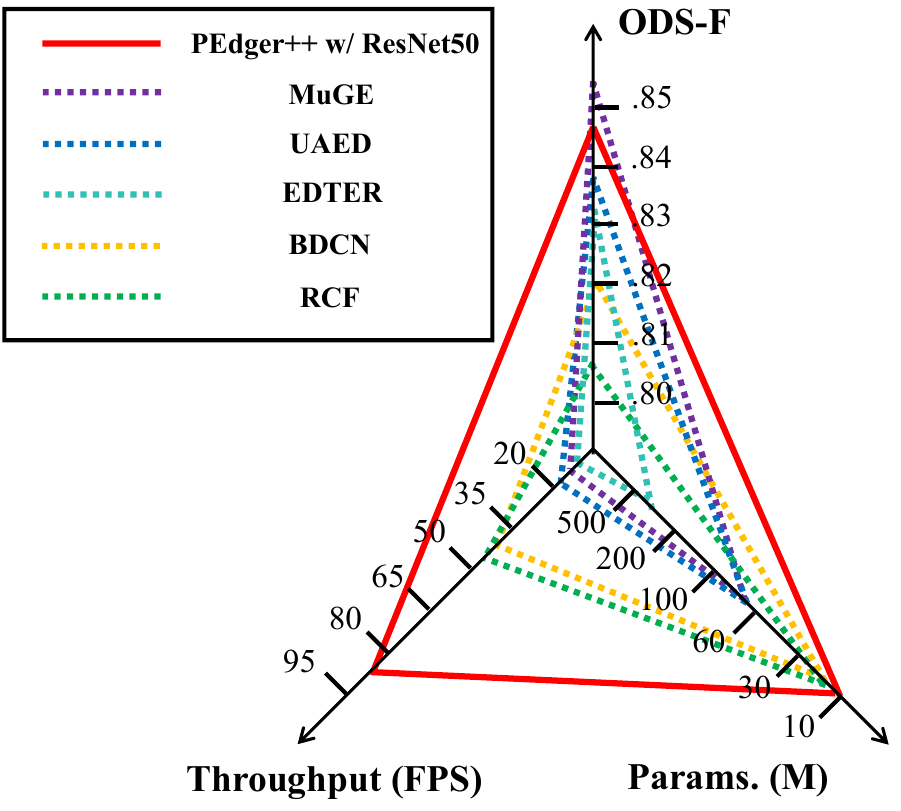}} \quad \quad\quad \subfloat[Methods w/o pre-training]{\includegraphics[width=0.28\linewidth]{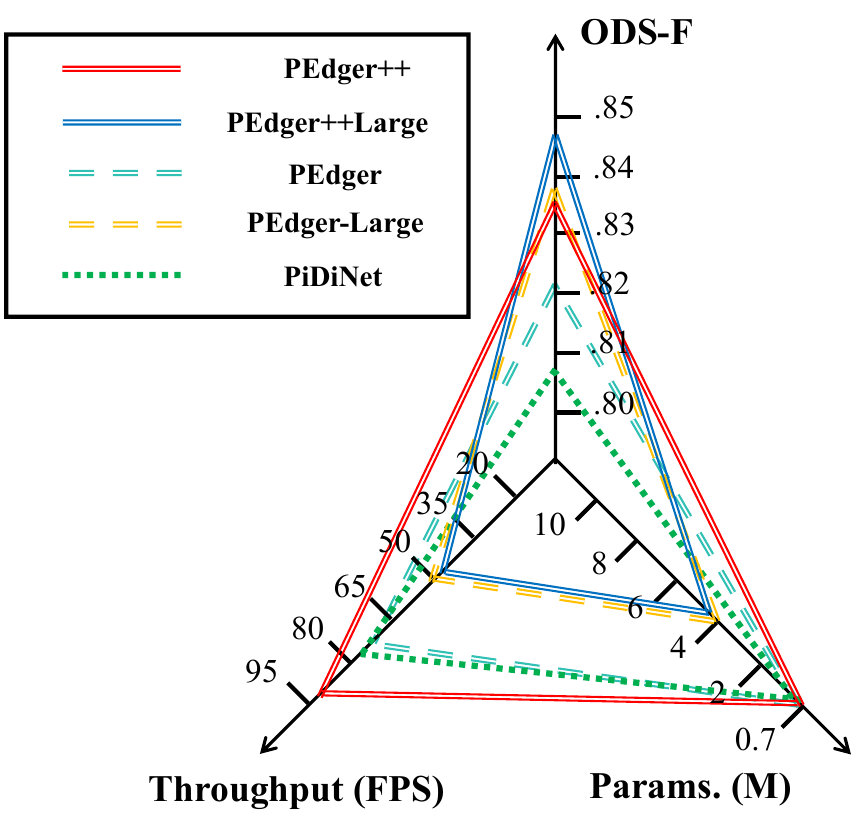}} \quad \quad\quad \subfloat[Our different configurations]{\includegraphics[width=0.31\linewidth]{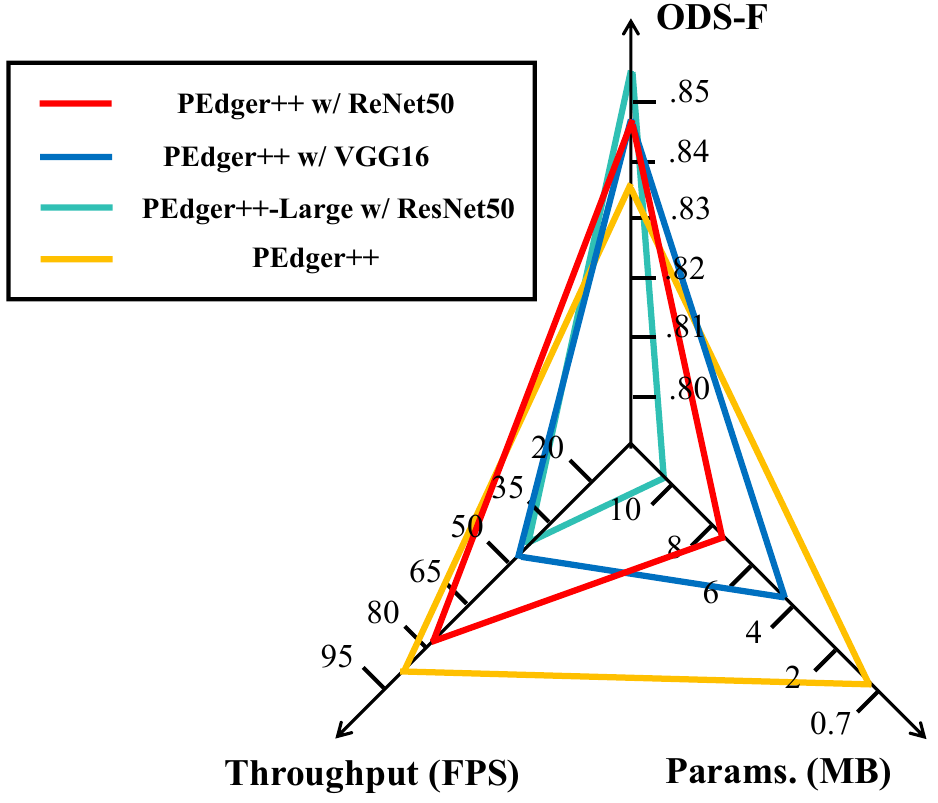}}
    \vspace{5pt}
    \caption{Quantitative comparisons on the BSDS dataset. A larger area of the formed triangle indicates better performance, encompassing three aspects including accuracy (ODS-F), running speed (FPS), and model size (M).}
    \label{tradeoff}
\end{figure*}

Towards advancing the development of an accurate, fast, and lightweight edge detector in the deep learning era, we have to confront a key challenge: \textit{How to explore discriminative and robust information without introducing redundant parameters and expensive computations}? As previously mentioned, shallow or hand-crafted features, such as image gradients, color distributions, and spatial positions, often struggle with discriminability. For instance, pixels with high gradient magnitudes may originate from high-frequency details that are not necessarily edges. Therefore, capturing high-level semantics is beneficial for distinguishing perceptually meaningful edges from misleading details.
But, achieving this goal often necessitates a large amount of network parameters and complex architectures to learn the intricate patterns involved. Consequently, attaining high detection accuracy on devices with limited resources poses a formidable challenge.

To address this challenge, we investigate the integration of useful knowledge--albeit potentially weak--from various sources as a powerful agent. This approach allows us to explore discriminative information through the lens of ensemble learning \cite{Heterogeneous,SELC,ensembles,Generalization,RandomForests}. While knowledge learned from a single source may be biased or unreliable, particularly when the batch size is small, it can still contribute to an effective ensemble. To validate the primary claim of this work, cross-information from multiple sources, including heterogeneous network architectures,  different training moments, and multiple parameter samplings, is utilized in this work for integration/ensemble.

\subsection{Contribution} 

\emph{``Many hands make light work."--John Heywood} 
\\

Inspired by ensemble learning, this work presents a novel framework for edge detection, which integrates knowledge acquired from diverse sources, specifically heterogeneous network architectures, different training moments, and various parameter samplings. First, we incorporate two networks with recurrent (slower) and non-recurrent (faster) structures during training. The knowledge learned from these two architectures is fused using a simple confidence-aware weighting strategy. Second, to assemble information across different training moments, we propose a momentum-based approach to fuse the network parameters corresponding to different epochs. 

Finally, we treat network parameters as random variables following a specific probabilistic distribution, \textit{i.e.},  $p(\mathbf{\Theta}|\mathcal{D})$, to deal with epistemic uncertainty \cite{NEURIPS2020_322f6246,NEURIPS2022_18210aa6,NEURIPS2019_b53477c2,NEURIPS2019_118921ef,NEURIPS2023_da7ce04b,bayesian2,bayesian3}. According to Bayes' theory, the ensemble of stochastic parameter samplings can approximate the Bayesian posterior $p(\mathbf{\Theta}|\mathcal{D})$ over these probabilistic parameters, where $\mathbf{\Theta}$ and $\mathcal{D}$ stand for network parameters and training data, respectively. As illustrated in Fig. \ref{vis}, when sampling different parameters, the model focuses on different aspects/views of the same input, leading to different edge predictions. By combining the comprehensive knowledge from multiple parameter samplings, the robustness of edge detection compared to the biased or uncertain information derived from a single parameter sampling is enhanced, particularly given the ambiguity and subjectivity associated with edge definitions. Figure \ref{vis} shows how different parameter samples from the Bayesian posterior focus on distinct aspects of the same input image,  and output varied edge predictions. Combining these diverse predictions corresponding to differently sampled parameters, yields a more robust and accurate edge detection result. It is noteworthy that, during training, the two networks are updated collaboratively. In other words, they are trained simultaneously without waiting for one to finish before starting another. In the testing phase, only a single forward inference on the faster network is required.

\begin{figure*}[t]
\centering
    \includegraphics[width=0.85\linewidth]{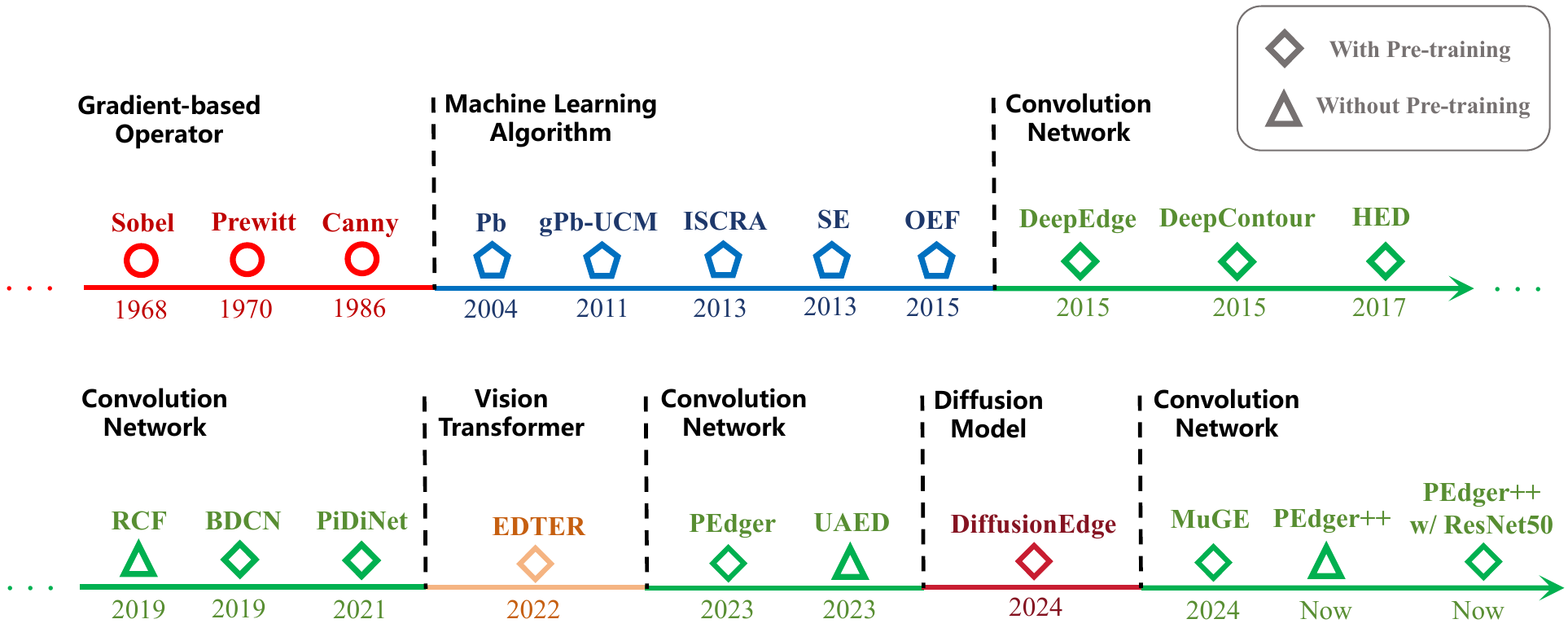}
    
    \caption{Timeline of key developments in edge detection.}
    \label{timeline}
\end{figure*}


A preliminary version of this manuscript appears in \cite{PEdger}. Compared to our previous PEdger \cite{PEdger}, which  focuses solely on integrating across network structures and training moments, this journal version designs a novel algorithm to approximate Bayesian posteriors through parameter ensembles, dealing with the epistemic uncertainty. Unlike traditional Bayesian learning methods, we determine the influence/weight of each parameter sampling based on its generalization ability on unseen data. Moreover, we avoid using inefficient batch normalization in PEdger, opting instead for adaptive gradient clipping \cite{GradientClipping} to accelerate the process. The above improvements result in significant performance gains in PEdger++ (\emph{e.g.}, on the BSDS dataset, ODS-F 0.835, Throughput 92FPS, Parameter 716K), surpassing SOTA competitors like PEdger (ODS-F 0.821, Throughput 71FPS, Parameter 734K) and PiDiNet (ODS-F 0.807, Throughput 76FPS, Parameter 710K).  

Besides, we provide more extensive ablation studies in this journal paper to evaluate the efficacy of our design, and offer a broader range of model versions with varying computational costs. For instance, by increasing the amount of model parameters, PEdger++Large further boosts the accuracy to ODS-F 0.848 with Throughput 48.5FPS and Parameter 4.3M. Additionally, when incorporating pre-training as in most previous methods, our PEdger++ w/ ResNet50 and PEdger++Large w/ ResNet50 achieve ODS-F 0.846 (Throughput 79FPS, Parameter 7.8M) and ODS-F 0.857 (Throughput 37.6FPS, Parameter 12.7M), respectively. For more details, please refer to Fig.~\ref{tradeoff}. These results demonstrate that our method can be flexibly adjusted according to different user and application demands.

\section{Related Work}
\label{sec:related}
\subsection{Edge Detection}
    Edge detection is a fundamental problem of image processing and computer vision, with extensive research efforts dedicated to this field, as depicted in Fig.~\ref{timeline}. Despite significant advancements, balancing accuracy and efficiency remains a persistent challenge. Early edge detection techniques originated from gradient-based operators such as Sobel \cite{sobel}, Prewitt \cite{Prewitt}, and Canny \cite{Canny}, which detect edges by computing intensity gradients and emphasizing regions with abrupt changes. To refine edge detection and suppress noise or irrelevant details, the Laplacian of Gaussian (LoG) operator \cite{smith1988edge} applies Gaussian smoothing before calculating second-order gradients. While being computationally efficient, these traditional gradient-based methods often struggle to deliver the accuracy required by real-world applications, particularly in handling complex or noisy images. 

Alternatively, many researchers have explored machine learning approaches for edge detection as a distinct category. For example, pb \cite{pb} and gpb-UCM \cite{gpu-ucm} integrate oriented gradient features from three brightness channels, along with a texture channel, into a logistic regression classifier. ISCRA \cite{ISCRA} segments input images into multiple regions and then gradually merges them hierarchically. Structured Edges (SE) \cite{se} uses a random decision forest to detect edges, leveraging the intrinsic structure of local image patches. Hallman \textit{et al.} \cite{oef} further improved edge detection by integrating efficient clustering of training samples and calibrating individual tree output probabilities, yielding better results than SE \cite{se}. However, despite their increased accuracy, these machine learning-based methods are limited by their dependence on hand-crafted features and complex learning pipelines, which restrict their potential for further performance improvements.

Amidst the rise of deep learning techniques, high-level semantic features can be learned by deep neural networks, significantly enhancing edge detection performance. Generally, based on the classification requirements, deep edge detectors are categorized into two types: binary pixel-level classification and multi-pixel-level classification (\textit{i.e.}, semantic edge detection \cite{SEAL,STEAL,DFF,Casenet,PNTEdge,DBLP:journals/pr/PanJZLT24}). This work mainly concentrates on deep learning-based binary edge detection. 
As a representative in this category, Ganin \textit{et al.} \cite{N4Fields} employed nearest neighbor search in conjunction with neural network predictions to tackle the under-fitting problem, which has been proven to be beneficial for edge detection and thin object segmentation. DeepEdge \cite{deepedge} leverages object-related features to guide contour detection, arguing that object recognition and contour identification are highly correlated tasks. Almost the same time, DeepContour \cite{deepcontour} divides the contour class into multiple sub-classes, each trained with distinct parameters to improve edge detection precision. HFL \cite{HFL}, which uses the VGG16 \cite{VGG16} model pre-trained on ImageNet \cite{ImageNet} as its backbone, demonstrates how detected edges can enhance various high-level vision tasks. COB \cite{Man+16a,COB} incorporates gradient orientations and produces hierarchical oriented contours, improving the quality of edge detection. To ensure a well-guided optimization during training, HED \cite{HED} and RCF \cite{RCF} yield multi-scale side-outputs, where each output is supervised by the corresponding ground truth. While LPCB \cite{LPCB} and CED \cite{CED} adopt bottom-to-top and top-to-bottom information propagation strategies to facilitate interaction between outputs of different scales, which help generate crisp edges. However, a notable limitation of these approaches is that the side-outputs are trained under identical supervision, ignoring the intrinsic diversity between different scales.

To address the aforementioned issues, various studies \cite{relaxed,BDCN_cvpr,EDTER,treasure,MuGE} have imposed diverse supervisions on side-outputs. Besides, several works have explored the use of Vision Transformers (ViT) and diffusion models for edge detection, yielding promising results, as demonstrated by EDTER \cite{EDTER} and DiffusionEdge \cite{Diffusionedge}.  Recently, UAED \cite{treasure} employs a dual-branch structure to handle annotation uncertainty, recognizing that divergence among annotators can affect edge detection. One branch focuses on detecting edges, while the other quantifies the uncertainty caused by annotation variations. MuGE \cite{MuGE} generates multiple edge maps with varying granularities by decomposing feature maps into low-frequency and high-frequency components. While these deep learning-based methods substantially improve edge detection accuracy, they often come with the trade-off of redundant network parameters and complex operations, resulting in increased computational costs. In response, PiDiNet \cite{PiDiNet_TPAMI} introduces pixel difference convolution, an efficient technique for extracting edge-relevant features. Compared to previous methods, PiDiNet reduces model size and accelerates processing speed, but there remains large room for improvement in accuracy.

This work seeks to advance the field of edge detection by developing an innovative edge detector that achieves high accuracy, rapid processing speed, and small model size. Unlike previous deep learning-based edge detectors, our method delivers superior accuracy across varying levels of computational complexity. This advancement is underpinned by the ability to avoid redundant network parameters and inefficient computational processes, ensuring both efficiency and precision.  

\subsection{Uncertainty Modeling}
Uncertainty modeling is a critical problem in machine learning, essential for evaluating the reliability of models at specific data points \cite{DBLP:conf/nips/HuangLZ23,DBLP:conf/nips/SchweighoferAIK23,DBLP:conf/nips/WarburgMBH23}. Uncertainty can be broadly categorized into two types: aleatoric and epistemic uncertainty \cite{uncertainties}.  Aleatoric uncertainty pertains to the inherent noise in the data itself. Data of poor quality or corrupted by stochastic noise results in higher aleatoric uncertainty, often modeled using techniques such as Dirichlet priors  \cite{DBLP:conf/nips/MalininG18,DBLP:conf/nips/MalininG19}, or post-training approaches \cite{DBLP:conf/nips/SensoyKK18,DBLP:conf/icml/GuoPSW17,DBLP:conf/nips/KullPKFSF19}. On the other hand, epistemic uncertainty refers to the uncertainty in the model's parameters, indicating the model’s limited knowledge of the input distribution. High epistemic uncertainty reflects inadequate understanding of the data. Bayesian deep learning \cite{NEURIPS2020_322f6246,NEURIPS2022_18210aa6,NEURIPS2019_b53477c2,NEURIPS2019_118921ef,10302334,bayesian2,bayesian3,9412521,backpropagation,DBLP:phd/ca/Neal95} has emerged as an effective approach to model epistemic uncertainty, which, as a well-established technique in computer vision and multimedia fields, has a rich history of applications \cite{prior,action,fast,Libre,volumetric,registration,audiovisual,9924527,bayesian3,LIANG2023109810}, offering robust solutions for modeling epistemic uncertainty. In this context, network parameters $\mathbf{\mathbf{\Theta}}$ are treated as random variables, and the posterior distributions of these parameters over training data $\mathcal{D}$ can be derived using Bayes' theorem: ${p}(\mathbf{\mathbf{\Theta}}|\mathcal{D}) \defeq {p}(\mathcal{D} | \mathbf{\Theta}) {p}(\mathbf{\Theta}) / {p}(\mathcal{D})$. 

Unfortunately, it is impractical to directly compute the analytical Bayesian posteriors ${p}(\mathbf{\Theta}|\mathcal{D})$. Hence, methods for approximating Bayesian inference become essential for estimating the posterior distribution of parameters \cite{NEURIPS2023_5d97b7e6,NEURIPS2023_7d25b1db,NEURIPS2022_12143893,Qu_2021_ICCV,9121755,Krishnan_Subedar_Tickoo_2020,9946419}. One popular approach is variational inference, which approximates Bayesian inference by finding simpler distributions that minimize divergence from the true posterior \cite{variational1,variational2}. Gal \textit{et al.} \cite{DropoutBayesian} further applied Dropout \cite{dropout} to neural network parameters to approximate variational inference in Gaussian processes. More recently, DropConnect \cite{DropConnect} quantifies the model parameter uncertainty via directly imposing Bernoulli distribution on network parameters, without the need of additional trainable weights. Teye \textit{et al.} \cite{batch} explored the stochasticity arising from randomly chosen mini-batch instances, interpreting it as a proxy for Bayesian inference. Lakshminarayanan \textit{et al.} \cite{scalable} investigated the use of ensembles and adversarial examples in Bayesian learning, offering a scalable alternative. In parallel, several works \cite{stochastic,DBLP:journals/pami/HansenS90,DBLP:conf/nips/Lakshminarayanan17} exploit deep stochastic ensembles to approximate the Bayesian posterior. 

Despite demonstrating encouraging performance, previous literature has rarely addressed how to reasonably assess the influence of each parameter sampling when combining or fusing multiple parameter samplings. This limitation constrains the potential for further accuracy improvements. To overcome this challenge, this paper proposes an effective ensemble scheme for estimating the Bayesian posterior based on their generalization ability, better measuring the epistemic uncertainty in edge detection.
\begin{figure}[t]
    \centering
    \subfloat[Recurrent Architecture]{
    \includegraphics[width=\linewidth]{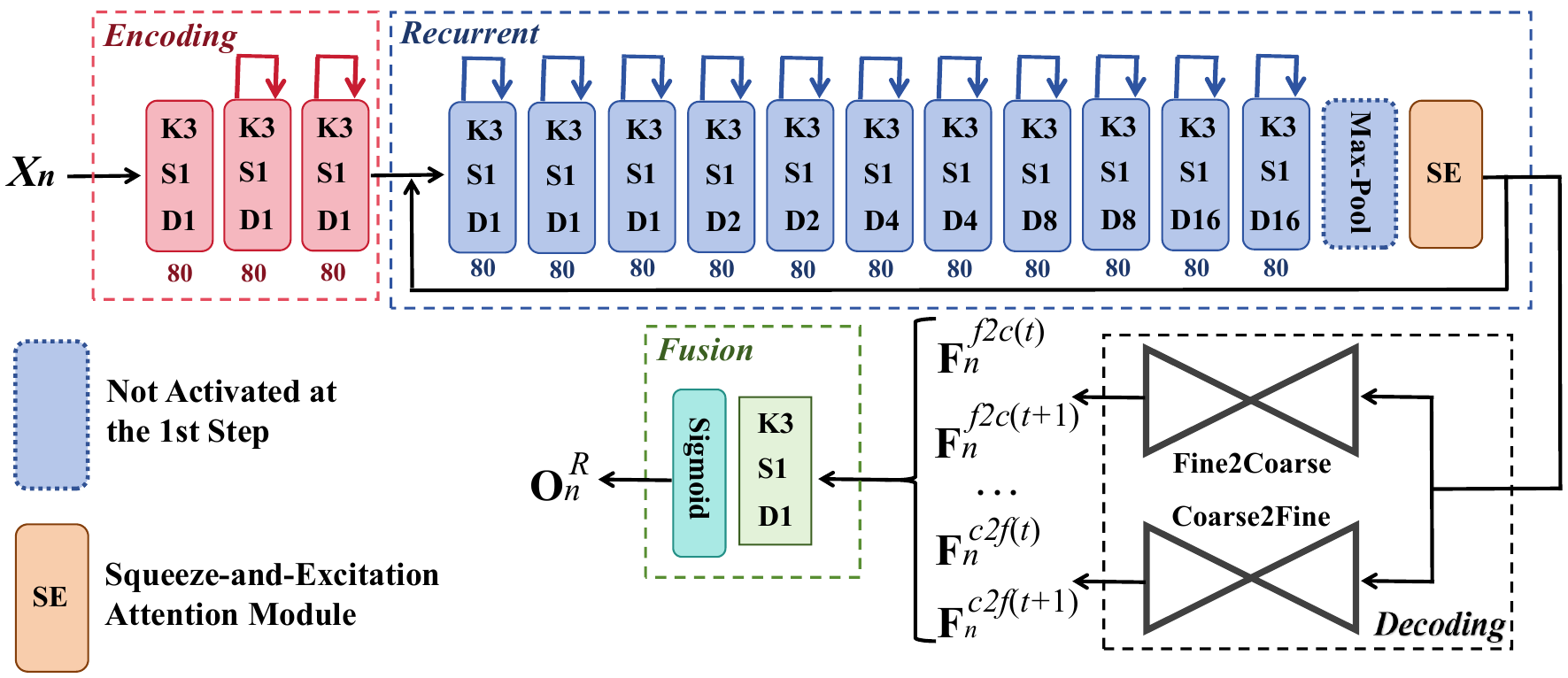}
    }
    
    \subfloat[Non-recurrent Architecture]{
    \includegraphics[width=\linewidth]{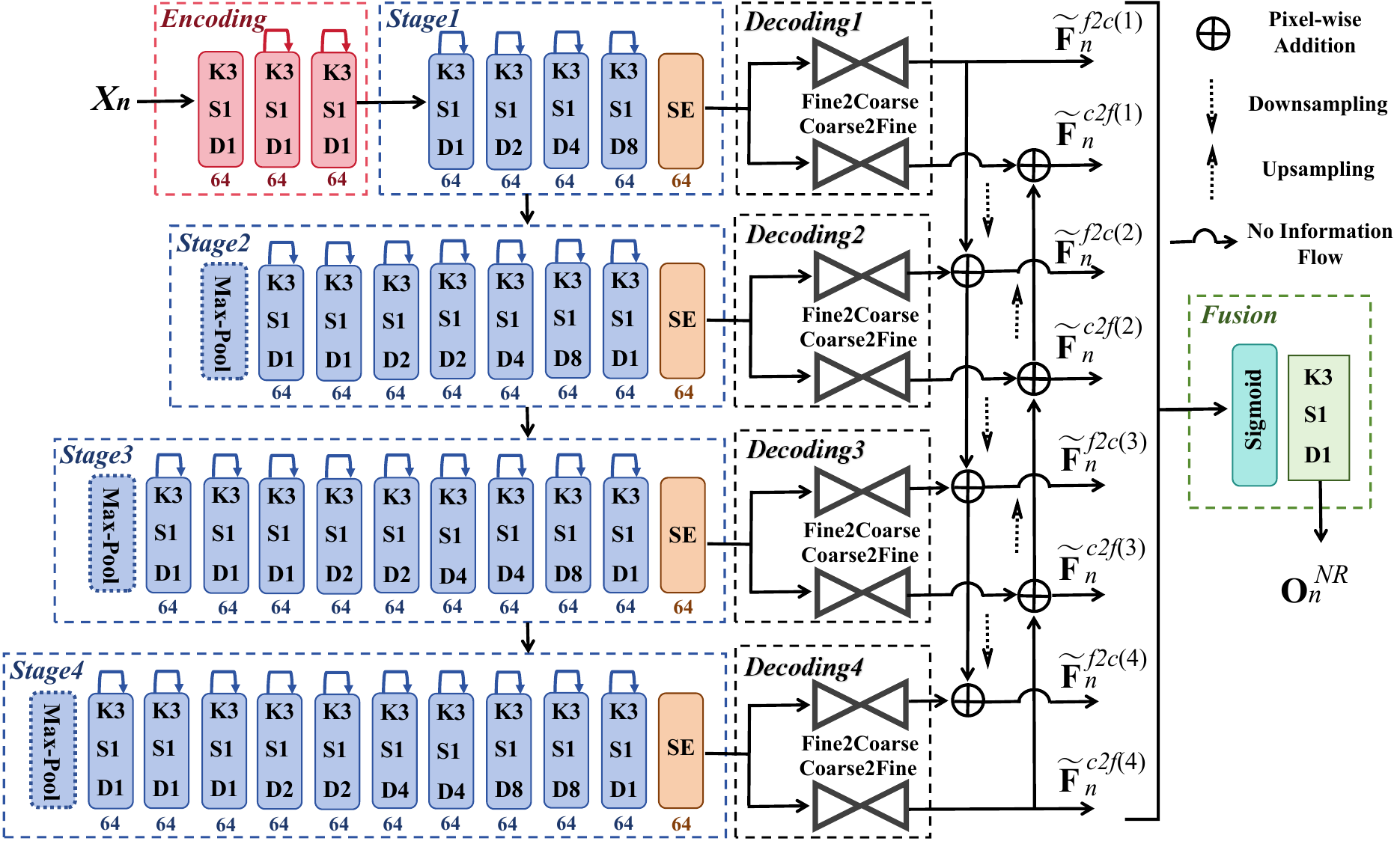}
    }
    
    \caption{Recurrent and non-recurrent architectures. $K$, $S$, and $D$ are the kernel size, stride, and dilation rate, respectively. The number of output channels is shown below each block.}
    \label{arch}
\end{figure}

\section{Methodology}
This work aims to devise a practical edge detector from an ensemble perspective. Inspired by principles from Error-Ambiguity decomposition (analogous to Bias-Variance decomposition and others) \cite{ensembles,Generalization,RandomForests}, a combination of various agents can enhance overall accuracy. The generalization error $\mathbf{E}$ of an ensemble depends on two primary factors, represented as follows:
\begin{equation}
    \mathbf{E}= \bar{\mathbf{E}} - \bar{\mathbf{A}}, 
    \label{eq:ensemble}
\end{equation}
where $\mathbf{\bar{E}}$ designates the average error of the individual predictors, and $\mathbf{\bar{A}}$ is the diversity/difference between the ensemble and its components. Hence, we propose assembling cross-information from various sources, including heterogeneous network architectures, different training moments, and multiple parameter samplings in this work, so as to achieve both high accuracy and diversity.


\subsection{Heterogeneous Network Architectures}
\label{NetworkArchitecture}
Our proposed framework encompasses very simple recurrent and non-recurrent networks\footnote{The primary contribution of this work lies in the cross-information assembly rather than the network architecture itself. Thus, to minimize the impact of complex network designs and to validate our main objective, we intentionally utilize straightforward architectures.}, each initialized with distinct random parameters, to capture diverse features and enhance robustness. The two networks are collaboratively trained in a single-stage process. This ensemble strategy is grounded in the effectiveness of integrating heterogeneous models to improve resilience, as evidenced by prior studies on model ensembles \cite{ensembles,Generalization,RandomForests}. During the testing phase, only the non-recurrent network is executed. Notably, to accelerate inference speed, we avoid the inefficient batch normalization and instead apply adaptive gradient clipping \cite{GradientClipping}, which dynamically clips back-propagation gradients based on the ratio of gradient norms to parameter norms. This clipping strategy enables the training of deep neural networks without the need for normalization layers. As a consequence, our PEdger++ framework possesses faster inference speeds than the previous PEdger model \cite{PEdger}.

\textbf{Recurrent Architecture.} The blueprint of our recurrent network $\mathcal{G}^R(\cdot)$ is illustrated in Fig. \ref{arch} (a). For each data sample $\mathbf{X}_n$, an initial encoding module--consisting of a convolution layer and two residual blocks--processes the input, producing a feature set. These features are then fed into a recurrent module with parameters shared across all recurrent steps. In detail, at step $t$ ($t > 1$), the recurrent module receives as input the features generated at step $(t-1)$. For the first recurrent step ($t = 1$), it directly takes the features from the encoding module. Each step's output then serves as the input for the following step, continuing iteratively until the maximum step count $T$ is reached.

With the exception of the first step, the recurrent module is connected with a max-pooling operation at each subsequent recurrent step. As demonstrated by \cite{HED,RCF,BDCN_cvpr}, using max-pooling progressively enlarges the receptive fields, promoting multi-scale side-output edges. As steps advance, the information captured becomes increasingly coarse, which strengthens responses for larger objects due to the expanded receptive field size. Additionally, inspired by works like \cite{LPCB,CED,BDCN_cvpr} that aggregate information bi-directionally (fine-to-coarse and coarse-to-fine), we introduce two parallel branches in the decoding module (depicted in Fig. \ref{arch}). 
These branches aggregate features across steps in both directions: the fine-to-coarse branch accumulates features from earlier (finer) to later (coarser) steps, and the coarse-to-fine branch from later steps to earlier ones.
Concretely, at each step $t$, the side-output features $\mathbf{F}_n^{f2c(t)}$ are computed by adding outputs from the fine-to-coarse branch to the down-sampled side-output features from the previous step, $\mathbf{F}_n^{f2c(t-1)}$. Similarly, $\mathbf{F}_n^{c2f(t)}$ is obtained by adding outputs from the coarse-to-fine branch with up-sampled features $\mathbf{F}_n^{c2f(t+1)}$ from the next step. This recurrent operation can be formally expressed as:
\begin{equation}
\mathbf{F}_{n}^{f2c(t)} = 
\begin{cases} 
\mathbf{Z}_n^{f2c(t)}, & t=1, \\
\mathbf{Z}_n^{f2c(t)} + \text{Down}\left(\mathbf{F}_{n}^{f2c(t-1)}\right), & t > 1 ,
\end{cases}
\end{equation}
\begin{equation}
\mathbf{F}_{n}^{c2f(t)} = 
\begin{cases} 
\mathbf{Z}_n^{c2f(t)}, & t=T, \\
\mathbf{Z}_n^{c2f(t)} + \text{Up}\left(\mathbf{F}_{n}^{c2f(t+1)}\right), & t < T, 
\end{cases}
\end{equation}
where $\mathbf{Z}_n^{f2c(t)}$ and $\mathbf{Z}_n^{c2f(t)}$ are the features generated by the fine-to-coarse branch and coarse-to-fine branch at the $t$-th recurrent step for the $n$-th sample, in the decoding module (black part in Fig. \ref{arch} (a)), respectively. $\text{Down}(\cdot)$ and  $\text{Up}(\cdot)$ denote the down-sampling and up-sampling operation, respectively.
Finally, all side-output features are fused through a $1\times 1$ convolution, and processed with a Sigmoid activation to produce the edge detection output $\mathbf{O}_n^R$. Thanks to shared parameters, the recurrent network is remarkably compact and converges quickly. The fusion process can therefore be described as:
\begin{equation}
\mathbf{O}_{n}^{R} = \sigma\left(\mathcal{C}_{1\times 1}\left(\left[\mathbf{F}_{n}^{f2c(1)}, \mathbf{F}_{n}^{c2f(1)}, ..., \mathbf{F}_{n}^{f2c(T)}, \mathbf{F}_{n}^{c2f(T)}\right]\right)\right),
\end{equation}
where $\sigma(\cdot)$ means the Sigmoid activation function. $\mathcal{C}_{1\times 1}$ is the $1\times 1$ convolution layer. $\left[\cdot, ..., \cdot\right]$ represents the concatenation operation along the channel dimension. It is worth mentioning that, the recurrent structure employs a greater number of channels than the non-recurrent counterpart, reflecting efficient parameter usage to expand channel.

\begin{algorithm}[t]
    \caption{PEdger++ via Collaborative Learning}
    \label{alg:algorithm}
    \raggedright
    \textbf{Input}: Training set $\mathcal{D}_t$ and validation set $\mathcal{D}_v$, split from $\mathcal{D}$; the maximum number of parameter samplings $S$; total epochs $J$; back-propagation networks  $\mathcal{G}^R$ and $\mathcal{G}^{NR}$, and momentum networks $\mathcal{G}_m^R$ and $\mathcal{G}_m^{NR}$ \\ 
    
    \textbf{Output}: Final network $\mathcal{G}_m^{NR}(\cdot; \mathbf{\Theta}^{NR})$\\
    
    \textbf{Initialize}: $j\defeq0$; $\eta_{J}\defeq0.8$; $\mu\defeq 0.5$, $\lambda$ (varied for different datasets), and initialized $\mathcal{G}^R$ and $\mathcal{G}^{NR}$
    \begin{algorithmic}[1] 
        \WHILE{$j < J$}
        \STATE $n_t\defeq 1$;
        \STATE Determine $\eta_{j}$ via Eq. \eqref{weightincrease};
         \WHILE{$n_t < N_t$}
        \STATE Sample a data pair ($\mathbf{X}_{n_t}$, $\mathbf{\mathbf{Y}}_{n_t}$) from $\mathcal{D}_t$;
        \IF {$j > 0$}
        \STATE Compute $\mathbf{M}_{n_t}$ via Eqs. \eqref{eq:Mnt} and \eqref{uncertainty};
        \STATE Refine $\tilde{\mathbf{Y}}_{n_t}$ via Eq. \eqref{soft};
        \ELSE
        \STATE  $\tilde{\mathbf{Y}}_{n_t} \defeq \mathbf{\mathbf{Y}}_{n_t}$;
        \ENDIF
        \STATE Update $\mathcal{G}^R$ and $\mathcal{G}^{NR}$ through back-propagation with respect to Eqs. \eqref{loss1} and \eqref{loss2}, respectively; 
        \STATE $n_t \defeq n_t + 1$;
        \ENDWHILE
        \IF {$j > 0$}
        \STATE Update $\mathcal{G}_m^R$ and $\mathcal{G}_m^{NR}$ via Eq. \eqref{momentum};
        \ELSE
        \STATE Copy the parameters from $\mathcal{G}^R$, $\mathcal{G}^{NR}$ to $\mathcal{G}_m^R$, $\mathcal{G}_m^{NR}$;
        \ENDIF
        \STATE Sample parameters $\mathbf{\Theta}_s^R \text{ and } \mathbf{\Theta}_s^{NR}, s \in \{1, ..., S\}$;
        \STATE Update $\mathbf{W}_s^{R} \text{ and } \mathbf{W}_s^{NR}, s \in \{1, ..., S\}$, via Eqs. \eqref{optimal_weight_R} and \eqref{optimal_weight_NR};
        \STATE $j \defeq j + 1$;
        \ENDWHILE
    \end{algorithmic}
\end{algorithm}

\begin{algorithm}[t]
    \caption{Efficient Version of Collaborative Learning}
    \label{alg:algorithm2}
    \raggedright
    \textbf{Input}: Training set $\mathcal{D}_t$ and validation set $\mathcal{D}_v$, split from $\mathcal{D}$; the maximum number of parameter samplings $S$; total epochs $J$; the non-recurrent back-propagation network $\mathcal{G}^{NR}$, and its corresponding momentum networks $\mathcal{G}_m^{NR}$ \\ 
    
    \textbf{Output}: Final network $\mathcal{G}_m^{NR}(\cdot; \mathbf{\Theta}^{NR})$\\
    
    \textbf{Initialize}: $j\defeq0$; $\eta_{J}\defeq0.8$; $\mu\defeq 0.5$, $\lambda$ (varied for different datasets), and initialized $\mathcal{G}^{NR}$
    \begin{algorithmic}[1] 
        \WHILE{$j < J$}
        \STATE $n_t\defeq 1$;
        \STATE Determine $\eta_{j}$ via Eq. \eqref{weightincrease};
         \WHILE{$n_t < N_t$}
        \STATE Sample a data pair ($\mathbf{X}_{n_t}$, $\mathbf{\mathbf{Y}}_{n_t}$) from $\mathcal{D}_t$;
        \IF {$j > 0$}
        \STATE Compute $\mathbf{M}_{n_t}$ via Eqs. \eqref{eq:Mnt} and \eqref{uncertainty};
        \STATE Refine $\tilde{\mathbf{Y}}_{n_t}$ via Eq. \eqref{soft};
        \ELSE
        \STATE  $\tilde{\mathbf{Y}}_{n_t} \defeq \mathbf{\mathbf{Y}}_{n_t}$;
        \ENDIF
        \STATE Update $\mathcal{G}^{NR}$ through back-propagation with respect to Eq. \eqref{loss2}, respectively; 
        \STATE $n_t \defeq n_t + 1$;
        \ENDWHILE
        \IF {$j > 0$}
        \STATE Update $\mathcal{G}_m^{NR}$ via Eq. \eqref{momentum};
        \ELSE
        \STATE Copy the parameters from $\mathcal{G}^{NR}$ to $\mathcal{G}_m^{NR}$;
        \ENDIF
        \STATE Sample parameters $\mathbf{\Theta}_s^{NR}, s \in \{1, ..., S\}$ via stochastic weight pruning;
        \STATE Update $\mathbf{W}_s^{NR}, s \in \{1, ..., S\}$, via Eq. \eqref{optimal_weight_NR};
        \STATE $j \defeq j + 1$;
        \ENDWHILE
    \end{algorithmic}
\end{algorithm}

\textbf{Non-recurrent Architecture.} Recurrent models often exhibit slower inference speeds due to the repeated processing of a recurrent module (blue part in Fig.\ref{arch} (a)). To construct a faster network $\mathcal{G}^{NR}(\cdot)$, we allocate varying parameter amounts across different scales\footnote{Multi-scale strategies can also be viewed as an ensemble of features from different scales.} (blue part in Fig.\ref{arch} (b)). In particular, parameter allocation follows a progressive pattern: the first stage has the fewest parameters, with the parameter amount increasing as the feature size decreases in subsequent stages. This allocation is purposefully structured to optimize feature extraction at each stage. Earlier stages prioritize capturing low-level features, which are characterized by larger spatial resolutions, while later stages focus on generating high-level semantic representations at lower resolutions.
 The rationale behind is that high-level semantic structures, which emerge in later stages, require a greater number of parameters for effective learning compared to the lower-level details processed in earlier stages. Moreover, as feature maps reduce in resolution across stages, processing later stages requires less computational overhead, allowing for efficient use of increased parameters. Thus, the parameter count scales up as the resolution scales down, contributing to faster inference speeds.

As shown in Fig. \ref{arch}, the primary distinction between the recurrent and non-recurrent structures lies in whether the parameters of the modules indicated in the blue and black (the decoding module) are shared across multiple scales. Notice that the non-recurrent architecture also incorporates bidirectional aggregation to produce coarse-to-fine features $\widetilde{\mathbf{F}}_n^{c2f}$ and fine-to-coarse ones $\widetilde{\mathbf{F}}_n^{f2c}$, with the same operational mechanism as in the recurrent structure.

\subsection{Different Training Moments}
Generally, deep neural networks learn different types of features as training unfolds. At early training moments, networks tend to capture broad, generalizable patterns; As training advances, they shift toward more complex and specific knowledge to handle harder samples.
However, this shift often introduces an increased risk of overfitting to noisy or detrimental information. 
Therefore, integrating knowledge across the entire training timeline can mitigate the adverse effects of biases or unreliable features learned at a single training moment. In this work, we propose to fuse knowledge learned at different training moments throughout the whole training process, via updating the parameters of a momentum network as follows:
\begin{equation}
\begin{aligned}
{\mathbf{\Theta}}_m^{(j)} \defeq \mu\cdot{\mathbf{\Theta}}_{bp}^{(j)} + (1-\mu)\cdot{\mathbf{\Theta}}_m^{(j-1)},
\end{aligned}
\label{momentum}
\end{equation}
where ${\mathbf{\Theta}}_m^{(j)}$ and ${\mathbf{\Theta}}_{bp}^{(j)}$ are the parameters of a momentum network, and its corresponding network updated by back-propagation, after epoch $j$, respectively. In addition, $\mu\in[0,1]$ is a balancing hyper-parameter, which is empirically set to $0.5$ in this work. During the training, both the recurrent $\mathcal{G}^R_m(\cdot)$ and non-recurrent $\mathcal{G}^{NR}_m(\cdot)$ momentum networks are maintained. 

\subsection{Multiple Parameter Samplings}
\label{Bayesian}
As previously mentioned, epistemic uncertainty can be addressed through Bayesian neural networks, where model parameters are represented as Bayesian posteriors $p(\mathbf{\Theta}|\mathcal{D})$.  Since $p(\mathbf{\Theta}|\mathcal{D})$ is unknown and direct computation is impractical, we approximate it by using a parameter ensemble that captures cross information from multiple sampled network parameters. Notably, prior methods \cite{bayesian1,subnetwork,bit,uncertainties,stochastic} for approximating Bayesian posteriors via parameter ensembles, typically determine the influence of each parameter sample based solely on training data, without considering its generalization capacity on unseen data. We argue that the weight assigned to each sample in the ensemble should reflect its generalization ability, ensuring that samples with limited generalization are down-weighted while those with stronger generalization are emphasized.

To achieve a well-weighted ensemble of differently sampled parameters, we propose selecting the weights based on generalization ability assessed on a validation set. We begin by stochastically partitioning the original training set $\mathcal{D}$ into a new training set,  $\mathcal{D}_t \defeq \{(\mathbf{X}_{n_t}, \mathbf{\mathbf{Y}}_{n_t}), n_t \in \{1, 2, ..., N_t\} \}$, and a validating set $\mathcal{D}_v \defeq \{(\mathbf{X}_{n_v}, \mathbf{Y}_{n_v}), n_v \in \{1, 2, ..., N_v \} \}$, where $N_t + N_v = N$. To find optimal weights for each sampled parameter, we solve the following problems:
\begin{equation}
\begin{aligned}
\argmin_{\mathbf{W}^R_s, s \in \{1, ..., S\} } & \quad \frac{1}{N_v}\sum_{n_v\defeq 1}^{N_v} \mathcal{L}(\mathbf{M}_{n_v}^R, \mathbf{Y}_{n_v}),
\\
\argmin_{\mathbf{W}^{NR}_s, s \in \{1, ..., S\} } & \quad \frac{1}{N_v}\sum_{n_v\defeq 1}^{N_v} \mathcal{L}( \mathbf{M}_{n_v}^{NR}, \mathbf{Y}_{n_v}),\\
\end{aligned}
\end{equation}
where $S$ is the total number of parameter samplings. $\mathbf{X}_{n_v}$ and $\mathbf{Y}_{n_v}$ denote the $n_v$-th input image and its corresponding ground truth in the validation set $\mathcal{D}_v$, respectively. $\mathbf{M}_{n_v}^R$ and $\mathbf{M}_{n_v}^{NR}$ are generated by:
\begin{equation}
\begin{aligned}
\mathbf{M}_{n_v}^R &\defeq \sum_{s\defeq 1}^{S} \mathbf{W}_s^{R} \circ  \mathcal{G}_m^R(\mathbf{X}_{n_v};\mathbf{\Theta}^R_s), 
\\
\mathbf{M}_{n_v}^{NR} &\defeq \sum_{s\defeq 1}^{S} \mathbf{W}_s^{NR} \circ \mathcal{G}_m^{NR}(\mathbf{X}_{n_v}; \mathbf{\Theta}^{NR}_s), 
\end{aligned}
\label{eq:Mnv}
\end{equation}
where $\mathbf{\Theta}^R_s$ and $\mathbf{\Theta}^{NR}_s$ indicate the $s$-th sampled network parameters of the recurrent $\mathcal{G}_m^R(\cdot; \mathbf{\Theta}^R_s)$ and non-recurrent $\mathcal{G}_m^{NR}(\cdot; \mathbf{\Theta}^{NR}_s)$ momentum networks, respectively. The symbol $\circ$ means the Hadamard product, while $\mathbf{W}_s^R$ and $\mathbf{W}_s^{NR}$ denote the pixel-wise non-negative ensemble weights of the $s$-th parameter sample of the recurrent and non-recurrent network, respectively. These weights are regularized by $\sum_{s \defeq 1}^{S} \mathbf{W}^R_s = \mathbf{1}$ and $\sum_{s \defeq 1}^{S} \mathbf{W}^{NR}_s = \mathbf{1}$, where $\mathbf{1}$ is an all-one matrix of the same dimensions as $\mathbf{Y}_{n_v}$.  We use the binary cross entropy $\mathcal{L}( \cdot, \cdot)$ as defined in Eq. \eqref{BCE}, to quantify  the distance between detected edges and the ground truth. 

The optimal weight of the $s$-th parameter sampling can be determined by the Lagrange multiplier method as:
\begin{equation}
\begin{aligned}
\mathbf{W}^{R}_s \defeq \mathbf{C} \circ \sum_{n_v\defeq 1}^{N_v} \mathcal{G}_m^{R}(\mathbf{X}_{n_v}; \mathbf{\Theta}^{R}_s) \circ (\frac{\mathbf{Y}_{n_v}}{\mathbf{M}_{n_v}^R} - \frac{\mathbf{1}-\mathbf{Y}_{n_v}}{\mathbf{1}-\mathbf{M}_{n_v}^R}),\\
\end{aligned}
\label{optimal_weight_R}
\end{equation}
\begin{equation}
\begin{aligned}
\mathbf{W}^{NR}_s \defeq \mathbf{C} \circ \sum_{n_v\defeq 1}^{N_v}\mathcal{G}_m^{NR}(\mathbf{X}_{n_v}; \mathbf{\Theta}^{NR}_s)\circ (\frac{\mathbf{Y}_{n_v}}{\mathbf{M}_{n_v}^{NR}} - \frac{\mathbf{1}-\mathbf{Y}_{n_v}}{\mathbf{1}-\mathbf{M}_{n_v}^{NR}}),\\
\end{aligned}
\label{optimal_weight_NR}
\end{equation}
with $\mathbf{C} \defeq \frac{N_v}{2 \sum_{n_v\defeq 1}^{N_v}\mathbf{Y}_{n_v}}$.

\begin{table*}[th!]
\centering
\caption{Performance comparison only on the augmented BSDS data for training. Running speeds are benchmarked on an RTX 3090 GPU with input images sized 320 $\times$ 480 images unless stated otherwise. $\dagger$ indicates GPU speeds cited from original papers, and $\ddagger$ denotes CPU speeds. $\mp$ represents the quantitative results of using our proposed efficient training strategy illustrated in \textbf{Algorithm} \ref{alg:algorithm2}. The best results are highlighted in \textbf{Bold}.}
\begin{tabular}{c|c|cc|cc|c}
\toprule
Method & Pub.'Year & ODS-F & OIS-F & Throughput & Params. & Pre-training \\
\midrule
Human &  - & .803 & .803 & - & - & - \\
\midrule
HED \cite{HED} &  IJCV'17 & .788 & .808 & 49FPS & 14.7M & ImageNet \\
AMH-Net \cite{AMH} &   NeurIPS'17 & .798 & .829 & - & - & ImageNet \\
LPCB \cite{LPCB} &  ECCV'18 & .808 & .824 & 30FPS$\dagger$ & - & ImageNet \\
RCF \cite{RCF} &  PAMI'19 & .798 & .815 & 41FPS & 14.8M & ImageNet \\
BDCN \cite{BDCN_cvpr} &  CVPR'19 & .806 & .826 & 39FPS & 16.3M & ImageNet \\
DSCD \cite{DSCD} &  ACMMM'20 & .802 & .817 & - & - & ImageNet \\
LDC \cite{LDC} & ACMMM'21 & .799 & .816 & - & - & ImageNet \\
FCL \cite{FCL} &  NN'22 & .807 & .822 & - & - & ImageNet \\
EDTER \cite{EDTER} &  CVPR'22 & .824 & .839 & 5.6FPS & 468.84M & ImageNet \\
UAED \cite{treasure} &  CVPR'23 & .829 & .847 & 18.3FPS & 69M & ImageNet \\
DiffusionEdge \cite{Diffusionedge} & AAAI'24 & .834 & .848 & 1.3FPS & 224.9M & ImageNet \\
MuGE (M=3) \cite{MuGE} &  CVPR'24 & .845 & .854 & 10.7FPS & 69M & ImageNet \\
MuGE (M=11) \cite{MuGE} &  CVPR'24 & \textbf{.850} & \textbf{.856} & 3.2FPS & 69M & ImageNet \\
SAUGE \cite{SAUGE} &  AAAI'25 & .839 & .860 & - & - & SA-1B \\
\midrule
PEdger++ w/ VGG16 &  - & .838 & .849 &  87.5FPS &  4.7M & ImageNet \\
PEdger++ w/ ResNet50 &  - & .841 & .852 &  79FPS  & 7.8M & ImageNet \\
PEdger++Large w/ VGG16 &  - & .843 & .852 & 46.3FPS  & 8.3M & ImageNet \\
PEdger++Large w/ ResNet50 &  - & .847 & .856 & 37.6FPS  & 12.7M & ImageNet \\
\midrule
PEdger++ w/ VGG16$^{\mp}$ &  - & .834 & .846 &  87.5FPS &  4.7M & ImageNet \\
PEdger++ w/ ResNet50$^{\mp}$ &  - & .836 & .849 &  79FPS  & 7.8M & ImageNet \\
PEdger++Large w/ VGG16$^{\mp}$ &  - & .839 & .850 & 46.3FPS  & 8.3M & ImageNet \\
PEdger++Large w/ ResNet50$^{\mp}$ &  - & .843 & .853 & 37.6FPS  & 12.7M & ImageNet \\
\midrule
PEdger &  ACMMM'23 & .813 & .834 & 71FPS & 734K & No\\
PEdger-Large &  ACMMM'23 & .819 & .840 & 50FPS & 4.0M & No\\
\midrule
PEdger++ & - & .830 & .846 & \textbf{92FPS} & \textbf{716K} & No\\
PEdger++$^{\mp}$ &  - & .827 & .843 & \textbf{92FPS} & \textbf{716K} & No \\
PEdger++Large &   - & .841 & .852 & 48.5FPS & 4.3M & No\\
PEdger++Large$^{\mp}$ &   - & .836 & .848 & 48.5FPS & 4.3M & No\\
\bottomrule
\end{tabular}
\label{BSDS_result1}
\end{table*}

Following previous Bayesian deep learning methods \cite{Cheng_2019_CVPR,AdversarialBayesian,DBLP:conf/iclr/KimP0Y23,bayesian4,bit}, we draw support from Monte Carlo Dropout \cite{dropout,DropoutBayesian} to sample parameters across all layers. For each epoch, we sample a set of parameters ($S$ in total) from the momentum networks, say $\mathbf{\Theta}_s^R$ and  $\mathbf{\Theta}_s^{NR}$, $s \in \{1, ..., S \}$, and calculate their corresponding ensemble weights that maximize performance, through Eqs. \eqref{optimal_weight_R} and \eqref{optimal_weight_NR}. These sampled parameters and their ensemble weights remain fixed when training on $\mathcal{D}_t$ in the subsequent epoch. During testing, only the non-recurrent network is executed, whose parameters are finally derived by a weighted average of multiple sampled parameters for inference: 
\begin{equation}
    \mathbf{\Theta}^{NR} \defeq \sum_{s \defeq 1}^{S} \omega_s \cdot \mathbf{\Theta}^{NR}_s,
    \label{InferenceParams}
\end{equation}
where $\omega_s$ is calculated by $\|\mathbf{W}^{NR}_s\|_1 / \sum_{s\defeq1}^S \|\mathbf{W}^{NR}_s\|_1$. We can infer from the following theorem that, the calculated optimal weights in Eqs. \ref{optimal_weight_R} and \ref{optimal_weight_NR}, can also maximize the mutual information \cite{shannon1948communication,houlsby2011bayesian,lakshminarayanan2017simple,belghazi2018mine} between these ensemble weights and the ground truth edge maps, \textit{i.e.}, $\mathcal{I}(\mathbf{Y}_{nv}; \mathbf{W}^{R}_s|\mathbf{X}_{nv})$ and $\mathcal{I}(\mathbf{Y}_{nv}; \mathbf{W}^{NR}_s|\mathbf{X}_{nv})$, $s\in\{1, 2, ..., S\}$. It ensures that the ensemble weights are maximally informative about the ground truth, contributing to the improved generalization capacity and reduced epistemic uncertainty on unseen data. For the  simplicity, we unify the expression of $\mathbf{W}^{R}_s$ and $\mathbf{W}^{NR}_s$ as $\mathbf{W}_s$, and $\mathcal{G}^{R}_m$ and $\mathcal{G}^{NR}_m$ as $\mathcal{G}$ in the following theorem and its proof. 
\begin{theorem}[Mutual Information Maximization]
Given validation dataset $\mathcal{D}_v = \{(\mathbf{X}_{n_v}, \mathbf{Y}_{n_v})\}_{n_v=1}^{N_v}$, and multipled groups of sampled parameters $\{\boldsymbol{\Theta}_s\}_{s=1}^S$, $ s\in\{1, 2, ..., S\}$, the following optimization objectives are equivalent:
\begin{enumerate}
    \item Mutual Information Maximization:
    \[
    \max_{\mathbf{W}_s} \mathcal{I}(\mathbf{Y}_{n_v}; \mathbf{W_s} \mid \mathbf{X}_{n_v}),
    \]
    where $\mathcal{I}(\cdot, \cdot)$ represents the mutual information function.
    
    \item Validation Loss Minimization:
    \[
    \min_{\mathbf{W}_s} \frac{1}{N_v} \sum_{n_v=1}^{N_v} \mathcal{L}\left( \sum_{s=1}^S \mathbf{W}_s \circ \mathcal{G}(\mathbf{X}_{n_v}; \boldsymbol{\Theta}_s), \mathbf{Y}_{n_v} \right),
    \]
    with $\mathcal{L}$ as the binary cross-entropy loss and constraints $\sum_{s=1}^S \mathbf{W}_s = \mathbf{1}$, $\mathbf{W}_s \geq 0$.
\end{enumerate}
\label{theorem:equivalence}
\end{theorem}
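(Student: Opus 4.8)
The plan is to prove the equivalence by writing the conditional mutual information $\mathcal{I}(\mathbf{Y}_{n_v}; \mathbf{W}_s \mid \mathbf{X}_{n_v})$ in its entropy-difference form and then recognizing the only weight-dependent term as the binary cross-entropy on the validation set. First I would invoke the standard decomposition
\begin{equation}
\mathcal{I}(\mathbf{Y}_{n_v}; \mathbf{W}_s \mid \mathbf{X}_{n_v}) = H(\mathbf{Y}_{n_v}\mid \mathbf{X}_{n_v}) - H(\mathbf{Y}_{n_v}\mid \mathbf{W}_s, \mathbf{X}_{n_v}),
\end{equation}
and observe that the first term $H(\mathbf{Y}_{n_v}\mid \mathbf{X}_{n_v})$ is fixed by the data distribution and carries no dependence on the ensemble weights. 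Consequently, maximizing the mutual information over $\mathbf{W}_s$ is equivalent to minimizing the conditional entropy $H(\mathbf{Y}_{n_v}\mid \mathbf{W}_s, \mathbf{X}_{n_v})$, which reduces the claim to a statement about that single term.

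Next I would make the probabilistic model explicit. The ensemble output $\mathbf{M}_{n_v} \defeq \sum_{s=1}^S \mathbf{W}_s \circ \mathcal{G}(\mathbf{X}_{n_v}; \boldsymbol{\Theta}_s)$ is taken to parameterize a per-pixel Bernoulli predictive law, $p(\mathbf{Y}_{n_v}=1\mid \mathbf{W}_s, \mathbf{X}_{n_v}) = \mathbf{M}_{n_v}$. Here the constraints $\sum_{s=1}^S \mathbf{W}_s = \mathbf{1}$ and $\mathbf{W}_s \geq 0$ play a structural role: they force $\mathbf{M}_{n_v}$ to be a convex combination of the Sigmoid-valued network outputs, guaranteeing $\mathbf{M}_{n_v}\in[0,1]$ so that it is a legitimate Bernoulli parameter. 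Under this model the negative log-likelihood of a pixel is exactly $-\mathbf{Y}_{n_v}\log\mathbf{M}_{n_v} - (\mathbf{1}-\mathbf{Y}_{n_v})\log(\mathbf{1}-\mathbf{M}_{n_v})$, which is precisely the integrand of the binary cross-entropy $\mathcal{L}$. Taking the conditioning expectation with respect to the empirical distribution over $\mathcal{D}_v$ then gives, exactly,
\begin{equation}
H(\mathbf{Y}_{n_v}\mid \mathbf{W}_s, \mathbf{X}_{n_v}) = \frac{1}{N_v}\sum_{n_v=1}^{N_v} \mathcal{L}(\mathbf{M}_{n_v}, \mathbf{Y}_{n_v}),
\end{equation}
so the two objectives differ only by the constant $H(\mathbf{Y}_{n_v}\mid\mathbf{X}_{n_v})$ and an overall sign. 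Hence their constrained optimizers coincide, establishing the equivalence.

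The main obstacle I anticipate is the rigorous justification of the middle identification, namely tying the abstract conditional entropy $H(\mathbf{Y}\mid \mathbf{W},\mathbf{X})$ to the concrete validation cross-entropy. This step hinges on two modeling choices that must be stated with care: first, adopting the ensemble output $\mathbf{M}_{n_v}$ as the sufficient statistic of the predictive Bernoulli, so that conditioning on $\mathbf{W}_s$ acts only through $\mathbf{M}_{n_v}$; and second, interpreting the information-theoretic quantities under the empirical validation distribution, which makes the displayed identity exact rather than an asymptotic Monte Carlo estimate. Once these conventions are pinned down, the remaining algebra is routine, and I would close the loop by noting that since minimizing the validation loss is exactly what the Lagrange-multiplier solution of Eqs.~\eqref{optimal_weight_R}--\eqref{optimal_weight_NR} achieves, those same weights are simultaneously the mutual-information maximizers, \emph{i.e.}, maximally informative about the ground-truth edge maps.
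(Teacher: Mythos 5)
Your proposal matches the paper's proof essentially step for step: the same entropy decomposition $\mathcal{I} = H(\mathbf{Y}\mid\mathbf{X}) - H(\mathbf{Y}\mid\mathbf{W},\mathbf{X})$, the same Bernoulli parameterization of $p(\mathbf{Y}\mid\mathbf{W},\mathbf{X})$ via the fused prediction $\mathbf{M}_{n_v}$, and the same identification of the conditional entropy with the validation cross-entropy, with the constraints playing the identical role of keeping $\mathbf{M}_{n_v}$ a valid convex combination. If anything, you are slightly more careful than the paper in flagging the convention that the entropies are taken under the empirical validation distribution (the paper glosses this as ``minimizing over the true data distribution is equivalent to minimizing the empirical loss''), so the proposal is correct and needs no revision.
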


\begin{proof}
The mutual information is expressed as:
\[
\mathcal{I}(\mathbf{Y}_{n_v};\mathbf{W}_s \mid \mathbf{X}_{n_v}) = H(\mathbf{Y}_{n_v} \mid \mathbf{X}_{n_v}) - H(\mathbf{Y}_{n_v} \mid \mathbf{W}_s, \mathbf{X}_{n_v}),
\]
where $H(\cdot)$ denotes entropy, and $s\in\{1, 2, ..., S\}$. Maximizing mutual information is equivalent to minimizing the conditional entropy $H(\mathbf{Y}_{n_v} \mid \mathbf{W}_s, \mathbf{X}_{n_v})$, which can be expanded as:
\[
H(\mathbf{Y}_{n_v} \mid \mathbf{W}_s, \mathbf{X}_{n_v}) = -\mathbb{E}_{p(\mathbf{Y},\mathbf{W},\mathbf{X})} \left[ \log p(\mathbf{Y} \mid \mathbf{W}, \mathbf{X}) \right].
\]

Under our ensemble method, the conditional distribution $p(\mathbf{Y} \mid \mathbf{W}, \mathbf{X})$ is parameterized by the fused prediction:
\[
\mathbf{M}_{n_v} = \sum_{s=1}^S \mathbf{W}_s \circ \mathcal{G}(\mathbf{X}_{n_v}; \boldsymbol{\Theta}_s),
\]
which gives:
\[
p(\mathbf{Y} \mid \mathbf{W}, \mathbf{X}) = \prod_{i,j} \left( \mathbf{M}_{i,j} \right)^{\mathbf{Y}_{i,j}} \left(1 - \mathbf{M}_{i,j}\right)^{1 - \mathbf{Y}_{i,j}},
\]
where $(i,j)$ indexes pixel locations placing at the $i$-the row and $j$-th column.

The conditional entropy then becomes:
\begin{align*}
&H(\mathbf{Y}_{n_v} \mid \mathbf{W}_s, \mathbf{X}_{n_v}) =\\
&-\mathbb{E} \left[ \sum_{i,j} \left(\mathbf{Y}_{i,j} \log \mathbf{M}_{i,j} + (1 - \mathbf{Y}_{i,j}) \log (1 - \mathbf{M}_{i,j}) \right) \right] \\
&= \mathbb{E} \left[ \mathcal{L}(\mathbf{M}_{n_v}, \mathbf{Y}_{n_v}) \right],
\end{align*}
Minimizing this expectation over the true data distribution is equivalent to minimizing the empirical loss on the validation set:
\[
\min_{\mathbf{W}_s} \frac{1}{N_v} \sum_{n_v=1}^{N_v} \mathcal{L} \left( \sum_{s=1}^S \mathbf{W}_s \circ \mathcal{G}(\mathbf{X}_{n_v}; \boldsymbol{\Theta}_s), \mathbf{Y}_{n_v} \right).
\]
The constraints $\sum_{s=1}^S \mathbf{W}_s = \mathbf{1}$ and $\mathbf{W}_s \geq 0$ ensure that $\mathbf{M}_{n_v}$ forms a valid convex combination of predictions. 

\end{proof}

\begin{table*}[th!]
\centering
\caption{Performance comparison on the augmented BSDS data and extra PASCAL VOC data for training. Running speeds are benchmarked on an RTX 3090 GPU with input images sized 320 $\times$ 480 images unless stated otherwise. $\dagger$ indicates GPU speeds cited from original papers, and $\ddagger$ denotes CPU speeds. $\mp$ represents the quantitative results of using our proposed efficient training strategy illustrated in \textbf{Algorithm} \ref{alg:algorithm2}. The best results are highlighted in \textbf{Bold}.}
\begin{tabular}{c|c|cc|cc|c}
\toprule
Method & Pub.'Year & ODS-F & OIS-F & Throughput & Params. & Pre-training \\
\midrule
Human &  - & .803 & .803 & - & - & - \\
\midrule
CEDN \cite{CEDN} &  CVPR'16 & .788 & .804 & 10FPS$\dagger$ & - & ImageNet \\
CED \cite{CED} &   CVPR'17 & .794 & .811 & - & 14.9M & ImageNet \\
LPCB \cite{LPCB} &   ECCV'18 & .815 & .834 & 30FPS$\dagger$ & - & ImageNet \\
RCF \cite{RCF} &  PAMI'19 & .806 & .823 & 41FPS & 14.8M & ImageNet \\
BDCN \cite{BDCN_cvpr} &  CVPR'19 & .820 & .838 & 39FPS & 16.3M & ImageNet \\
DSCD \cite{DSCD} &  ACMMM'20 & .813 & .836 & - & - & ImageNet \\
LDC \cite{LDC} & ACMMM'21 & .812 & .826 & - & - & ImageNet \\
FCL \cite{FCL} &  NN'22 & .815 & .834 & - & - & ImageNet \\
EDTER \cite{EDTER} &  CVPR'22 & .832 & .847 & 5.6FPS & 468.84M & ImageNet \\
UAED \cite{treasure} &  CVPR'23 & .838 & .855 & 18.3FPS & 69M & ImageNet \\
MuGE (M=3) \cite{MuGE} &  CVPR'24 & .852 & .859 & 10.7FPS & 69M & ImageNet \\
MuGE (M=11) \cite{MuGE} &  CVPR'24 & \textbf{.855} & \textbf{.860} & 3.2FPS & 69M & ImageNet \\
SAUGE \cite{SAUGE} &  AAAI'25 & .842 & .862 & - & - & SA-1B \\
\midrule
PEdger++ w/ VGG16 &  - & .843 & .854 & 87.5FPS &  4.7M & ImageNet \\
PEdger++ w/ ResNet50  &  - & .846 & .856 & 79FPS &  7.8M & ImageNet \\
PEdger++Large w/ VGG16 &  - & .848 & .855 & 46.3FPS &  8.3M & ImageNet \\
PEdger++Large w/ ResNet50 &  - & .857 & .861 & 37.6FPS &  12.7M & ImageNet \\
\midrule
PEdger++ w/ VGG16$^{\mp}$ &  - & .840 & .851 & 87.5FPS &  4.7M & ImageNet \\
PEdger++ w/ ResNet50$^{\mp}$  &  - & .842 & .854 & 79FPS &  7.8M & ImageNet \\
PEdger++Large w/ VGG16$^{\mp}$ &  - & .843 & .853 & 46.3FPS &  8.3M & ImageNet \\
PEdger++Large w/ ResNet50$^{\mp}$ &  - & .852 & .856 & 37.6FPS &  12.7M & ImageNet \\
\midrule
ISCRA \cite{ISCRA} &  CVPR'13 & .717 & .752 & - & - & No\\
SE \cite{se} &  ICCV'13 & .743 & .763 &  12.5FPS$\ddagger$ & - & No \\
OEF \cite{oef} &  CVPR'15 & .746 & .770 &  2/3FPS$\ddagger$ & - & No \\
\midrule
PiDiNet \cite{PiDiNet_TPAMI} &   PAMI'23 & .807 & .823 & 76FPS & 710K & No \\
PEdger &  ACMMM'23 & .821 & .840 & 71FPS & 734K & No\\
PEdger-Large &  ACMMM'23 & .839 & .844 & 50FPS & 4.0M & No\\
\midrule
PEdger++ &  - & .835 & .850 & \textbf{92FPS} & \textbf{716K} & No \\
PEdger++$^{\mp}$ &  - & .832 & .845 & \textbf{92FPS} & \textbf{716K} & No \\
PEdger++Large &  - & .848 & .859 & 48.5FPS & 4.3M & No \\
PEdger++Large$^{\mp}$ &  - & .844 & .856 & 48.5FPS & 4.3M & No \\
\bottomrule
\end{tabular}
\label{BSDS_result2}
\end{table*}

\begin{table*}[t]
\centering
\small
\caption{Performance comparison with multi-scale testing. }
\begin{tabular}{c|c|cc|cc|c}
  \toprule
Method & Dataset & ODS-F & OIS-F & Params. & Throughput & Pre-training \\
\midrule
FCL-MS &  & .816 & .833 &  - & - & ImageNet \\
EDTER-MS &    & .840 & .858 &  468.84M & 1.7FPS & ImageNet \\
UAED-MS &  & .837 & .855 &  69M & 7.3FPS & ImageNet \\
MuGE (M=3)-MS &  & .853 & .863 &  69M & 4.5FPS & ImageNet \\
MuGE (M=11)-MS & BSDS & .858 & .864 &  69M & 1.2FPS & ImageNet \\
\cmidrule(r){1-1}\cmidrule(r){3-7}
PEdger++ w/ VGG16-MS &  w/o VOC  & .844 & .856 & 4.7M & 30.5FPS & ImageNet \\
PEdger++ w/ ResNet50-MS &  & .847 & .856 & 7.8M & 25FPS & ImageNet \\
PEdger++Large w/ VGG16-MS &  & .849 & .857 & 9.3M & 14.7FPS & ImageNet \\
PEdger++Large w/ ResNet50-MS & & .853 & .861 & 12.7M & 11.6FPS & ImageNet \\
\cmidrule(r){1-1}\cmidrule(r){3-7}
PEdger-MS &  & .819 & .840 &  734K & 25.6FPS & No \\
PEdger++-MS & & .837 & .854 &  716K & 32.5FPS & No \\
\midrule
RCF-MS &  & .811 & .830 &  14.8M & 13.6FPS & ImageNet \\
BDCN-MS &  & .828 & .844 &  16.3M & 12FPS & ImageNet \\
FCL-MS &  & .826 & .845 &  - & - & ImageNet \\
LDC-MS &  & .819 & .834 &  - & - & ImageNet \\
EDTER-MS & & .848 & .865 &  468.84M & 1.7FPS & ImageNet \\
UAED-MS &  & .844 & .864 &  69M & 7.3FPS & ImageNet \\
MuGE (M=3)-MS &  & .858 & .865 &  69M & 4.5FPS & ImageNet \\
MuGE (M=11)-MS & BSDS & .861 & .867 &  69M & 1.2FPS & ImageNet \\
\cmidrule(r){1-1}\cmidrule(r){3-7}
PEdger++ w/ VGG16-MS &  w/ VOC  & .848 & .860 & 4.7M & 30.5FPS & ImageNet \\
PEdger++ w/ ResNet50-MS &  & .850 & .862 & 7.8M & 25FPS & ImageNet \\
PEdger++Large w/ VGG16-MS &  & .853 & .859 & 8.3M & 14.7FPS & ImageNet \\
PEdger++Large w/ ResNet50-MS &  & .862 & .871 & 12.7M & 11.6FPS & ImageNet \\
\cmidrule(r){1-1}\cmidrule(r){3-7}
PEdger-MS &  & .825 & .844 &  734K & 25.6FPS & No \\
PEdger++-MS & & .843 & .858 &  716K & 32.5FPS & No \\
\bottomrule
\end{tabular}
\label{BSDS_results_MS}
\end{table*}

\begin{table}[t]
\centering
\small
\caption{Comparison of decreases in total training time and detection accuracy. BSDS w/o VOC means only using BSDS data for training, while BSDS w/ VOC introduces extra PASCAL VOC data for training.}
\resizebox{\linewidth}{!}{\begin{tabular}{c|ccc}
  \toprule
Method & $\Delta$ ODS-F & $\Delta$ OIS-F & $\Delta$ Training Time \\
\midrule
\multicolumn{4}{c}{BSDS w/o VOC} \\
\midrule
PEdger++ w/ VGG16$^{\mp}$ &  -0.4\% & -0.3\% & -36.8\%    \\
PEdger++ w/ ResNet50$^{\mp}$  &   -0.5\% & -0.4\% & -34.9\%    \\
PEdger++Large w/ VGG16$^{\mp}$ &   -0.5\% & -0.4\% & -35.7\%  \\
PEdger++Large w/ ResNet50$^{\mp}$ &   -0.5\% & -0.3\%  & -32.1\%  \\
PEdger++$^{\mp}$ &   -0.4\%  &  -0.3\% & -37.6\%  \\
PEdger++Large$^{\mp}$ &   -0.4\%  &  -0.3\% & -36.4\%  \\
\midrule
\multicolumn{4}{c}{BSDS w/ VOC} \\
\midrule
PEdger++ w/ VGG16$^{\mp}$ &    -0.4\% &  -0.3\% & -42.5\%  \\
PEdger++ w/ ResNet50$^{\mp}$  &    -0.3\% &  -0.2\% & -40.8\%  \\
PEdger++Large w/ VGG16$^{\mp}$ &    -0.6\% &  -0.2\% & -41.2\%  \\
PEdger++Large w/ ResNet50$^{\mp}$ &    -0.5\% &  -0.5\% & -39.8\%  \\
PEdger++$^{\mp}$ &   -0.4\%  & -0.5\%  & -45.3\% \\
PEdger++Large$^{\mp}$ &   -0.4\%  & -0.5\%  & -43.9\% \\
\bottomrule
\end{tabular}}
\label{rate}
\end{table}

\begin{table}[t]
\centering
\caption{Comparison on the NYUD dataset. All results are computed with single-scale RGB inputs. The running speeds are tested on an RTX 3090 GPU with 500 $\times$ 500 inputs. $\mp$ represents the quantitative results of using our proposed efficient training strategy illustrated in \textbf{Algorithm} \ref{alg:algorithm2}. Best results are \textbf{Bolded}.}
\resizebox{0.48\textwidth}{!}{
\begin{tabular}{c|cc|c}
  \toprule
 Methods & ODS-F & OIS-F & Throughput \\
\midrule
 gPb-UCM \cite{gpu-ucm}  & .632 & .661 & 1/360FPS$\dagger$\\
 SE \cite{se} & .695 & .708 & 5FPS$\dagger$ \\
  OEF \cite{oef}  & .651 & .667 & -\\
\midrule
HED \cite{HED}  & .720 & .734 & 45FPS\\
LPCB \cite{LPCB}  & .739 & .754 & - \\
RCF \cite{RCF}  & .743 & .757 & 41FPS \\
AMH-Net \cite{AMH}  & .744 & .758 & - \\
BDCN \cite{BDCN_cvpr}  & .748 & .763 & 39FPS \\
EDTER \cite{EDTER}  & .774 & .789 & 2.1FPS \\
DiffusionEdge \cite{Diffusionedge}  & .761 & .766 & 0.8FPS \\
\midrule
PiDiNet \cite{PiDiNet_TPAMI}  & .733 & .747 & 72FPS \\
PEdger &  .742 & .757 & 70FPS \\
\midrule
PEdger++ & .765 & .769 & \textbf{88FPS} \\
PEdger++ w/ VGG16  & .775 & .778 & 73FPS \\
PEdger++ w/ ResNet50 & .778 & .780 & 69FPS \\
PEdger++Large w/ VGG16  & .780 & .784 & 65FPS \\
PEdger++Large w/ ResNet50  & \textbf{.782} & \textbf{.785} & 57FPS \\
\midrule
PEdger++$^{\mp}$ & .759 & .764 & \textbf{88FPS} \\
PEdger++ w/ VGG16$^{\mp}$  & .769 & .773 & 73FPS \\
PEdger++ w/ ResNet50$^{\mp}$ & .772 & .777 & 69FPS \\
PEdger++Large w/ VGG16$^{\mp}$  & .774 & .778 & 65FPS \\
PEdger++Large w/ ResNet50$^{\mp}$  & .776 & .780 & 57FPS \\
\bottomrule
\end{tabular}}
\label{NYUD}
\end{table}

\subsection{Robust Collaborative Learning}
To achieve a balance between accuracy and efficiency, we introduce a novel collaborative learning algorithm to investigate the robust knowledge across diverse network architectures, varying training moments, and multiple parameter samplings. Through this approach, we  deliver a solution that excels in accuracy, speed, and compact model size, the whole process of which is summarized  in \textbf{Algorithm} \ref{alg:algorithm}.


\begin{figure}[t]
    \centering
    \includegraphics[width=0.9\linewidth]{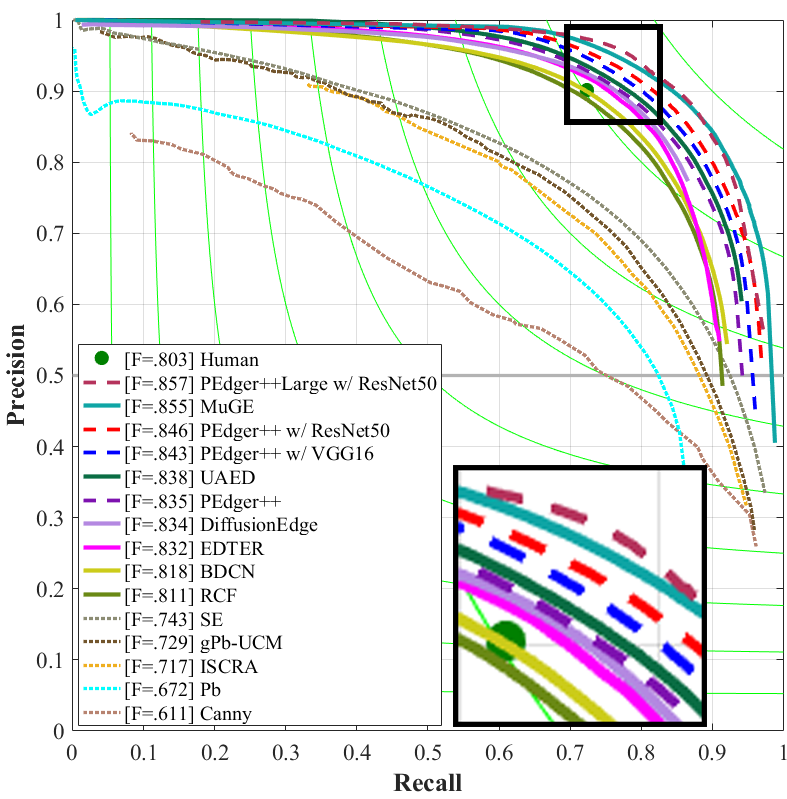}
    \caption{Precision-Recall curves on the BSDS dataset. Our results, along with those of other deep learning and non-deep learning competitors, are represented by dashed, solid, dotted lines, respectively.}
    \label{pr}
\end{figure}

In the initial epoch ($j \defeq 0$), the networks $\mathcal{G}^R$ and $\mathcal{G}^{NR}$ are trained solely on the original ground truth labels $\mathbf{Y}_{n_t}$. Following this first epoch, the parameters of the momentum networks, $\mathcal{G}_m^R$ and $\mathcal{G}_m^{NR}$, are updated by directly copying the trained parameters from $\mathcal{G}^R$ and $\mathcal{G}^{NR}$. Beginning in the second epoch and continuing thereafter, the predictions from the momentum networks are incorporated into training. Specifically, the predictions from the momentum networks across multiple sampled parameters are aggregated using the optimized ensemble weights $\mathbf{W}_s^{R}$ and $\mathbf{W}_s^{NR}$ ($s \in \{1, ..., S\}$), to produce $\mathbf{M}^{R}_{n_t}$ and $\mathbf{M}^{NR}_{n_t}$:
\begin{equation}
\begin{aligned}
\mathbf{M}_{n_t}^R &\defeq \sum_{s\defeq 1}^{S} \mathbf{W}_s^{R} \circ  \mathcal{G}_m^R(\mathbf{X}_{n_t};\mathbf{\Theta}^R_s), 
\\
\mathbf{M}_{n_t}^{NR} &\defeq \sum_{s\defeq 1}^{S} \mathbf{W}_s^{NR} \circ \mathcal{G}_m^{NR}(\mathbf{X}_{n_t}; \mathbf{\Theta}^{NR}_s).
\end{aligned}
\label{eq:Mnt}
\end{equation}
 To further integrate the knowledge from both network structures, we fuse the predictions from the two momentum networks based on their pixel-wise confidences by:
\begin{equation}
\begin{aligned}
    \mathbf{M}_{n_t} &\defeq \frac{\mathbf{M}^{R}_{n_t} \circ|\mathbf{M}^{R}_{n_t} - 0.5|+\mathbf{M}^{NR}_{n_t} \circ|\mathbf{M}^{NR}_{n_t} - 0.5|}{|\mathbf{M}^{R}_{n_t} - 0.5| + |\mathbf{M}^{NR}_{n_t} - 0.5|},
    \label{uncertainty}
\end{aligned}
\end{equation}
where $|\cdot|$ is pixel-wise absolute operation, and the division is also pixel-wise. When a network's prediction is closer to 0.5 compared to the other network's prediction, it is deemed less confident, indicating greater uncertainty in its output. Consequently, this network should contribute less to the fusion process, and \textit{vice versa}. 

Having the estimated $\mathbf{M}_{n_t}$ integrating knowledge across training moments, network structures, and parameter samples, the original target $\mathbf{\mathbf{Y}}_{n_t}$ can be replaced with a refined  version $\tilde{\mathbf{Y}}_{n_t}$, defined as:
\begin{equation}
    \tilde{\mathbf{Y}}_{n_t} \defeq \eta_{j} \cdot \mathbf{M}_{n_t}  + (1 - \eta_{j}) \cdot \mathbf{\mathbf{Y}}_{n_t},
    \label{soft}
\end{equation}
where $\eta_{j}$ is a weight controlling the reliance on $\mathbf{M}_{n_t}$, at the $j$-th epoch. It is worth to mention that, $\tilde{\mathbf{Y}}_{n_t}$ functions as a soft target \cite{DBLP:conf/cvpr/DiazM19,DBLP:conf/iccv/ZiZMJ21,teachingwithsoft,DBLP:conf/iclr/ZhouSCZWYZ21,DBLP:conf/ijcai/WangNFHWWLZL22,DBLP:conf/aaai/LienenH21}, which tempers overconfident class probabilities inherent in the original hard target. In our collaborative setting, since all the models are optimized collaboratively from scratch, they generally lack sufficient knowledge of the data distributions in the early training stages. To address this issue, and inspired by the self-distillation technique \cite{SelfDistillation}, we gradually increase $\eta_{j}$ for adjusting the importance of the $j$-th epoch over the course of training, which is simply computed by:
\begin{equation}
    \eta_{j} \defeq \eta_{J} \cdot \frac{j}{J},
    \label{weightincrease}
\end{equation}
where $J$ is the total number of training epochs. Empirically setting $\eta_{J} \defeq 0.8$ works sufficiently well in our experiments.

Furthermore, to accelerate the training process, we provide an efficient version of the collaborative training algorithm. To be more specific, we abandon the recurrent network during training, retaining only a single non-recurrent network. To integrate the information from heterogeneous models, we apply multiple random weight pruning operations to this non-recurrent network. The network parameters deemed less useful are pruned with a predefined probability, thereby generating diverse network architectures. The heterogeneity of network architectures is no longer achieved by incorporating distinct recurrent and non-recurrent models. Instead, it is realized through multiple random pruning iterations on the single non-recurrent network. Besides, the random pruning process can also be interpreted as a form of Bayesian sampling, where the pruned parameters represent the sampling outcome. Please refer to \textbf{Algorithm} \ref{alg:algorithm2} for details, which reduces the training time by approximately 40\% compared to the original collaborative learning strategy in \textbf{Algorithm} \ref{alg:algorithm}.

\subsection{Loss Function} 
Our models are trained on pairs of (input image, ground truth) from the newly constructed training set $\mathcal{D}_t$ split from $\mathcal{D}$. For the training, our penalty is derived from the widely-used cross-entropy loss, and applied to all side-outputs and final edge maps, which is written as:
\begin{equation}
\begin{aligned}
\mathcal{L}(\mathcal{G}(\mathbf{X}_{n_t}), \tilde{\mathbf{Y}}_{n_t}) \defeq &\alpha \cdot\| \tilde{\mathbf{Y}}_{n_t}\circ \log(\mathcal{G}(\mathbf{X}_{n_t}))\|_1+ \\
                \beta\cdot\|& (\mathbf{1} - \tilde{\mathbf{Y}}_{n_t})\circ \log(\mathbf{1} - \mathcal{G}(\mathbf{X}_{n_t})\|_1.
\end{aligned}
\label{BCE}
\end{equation}
In this work, $\alpha$ and $\beta$ are determined via:
\begin{equation}
\begin{aligned}
\alpha &\defeq \frac{\lambda\cdot\|\mathbf{\mathbf{Y}}_{n_t}\circ \tilde{\mathbf{Y}}_{n_t}\|_1}{\|\mathbf{\mathbf{Y}}_{n_t}\circ \tilde{\mathbf{Y}}_{n_t}\|_1+\|(\mathbf{1}-\mathbf{\mathbf{Y}}_{n_t})\circ (\mathbf{1}-\tilde{\mathbf{Y}}_{n_t})\|_1},\\
\beta &\defeq \frac{\|(\mathbf{1}-\mathbf{\mathbf{Y}}_{n_t})\circ (\mathbf{1}-\tilde{\mathbf{Y}}_{n_t})\|_1}{\|\mathbf{\mathbf{Y}}_{n_t}\circ \tilde{\mathbf{Y}}_{n_t}\|_1+\|(\mathbf{1}-\mathbf{\mathbf{Y}}_{n_t})\circ (\mathbf{1}-\tilde{\mathbf{Y}}_{n_t})\|_1},
\end{aligned}
\label{eq:ab}
\end{equation} 
where $\lambda$ is a hyper-parameter for balancing these two terms. The final learning objective contains $\mathcal{L}^R$ and $\mathcal{L}^{NR}$ respectively for $\mathcal{G}^R$ and $\mathcal{G}^{NR}$ as:
\begin{equation}
\begin{aligned}
\mathcal{L}^R\defeq& \sum_{t=1}^T \mathcal{L}(\sigma(\mathbf{F}_{n_t}^{f2c(t)}), \tilde{\mathbf{Y}}_{n_t}) +\\
\sum_{t=1}^T\mathcal{L}(&\sigma(\mathbf{F}_{n_t}^{c2f(t)}), \tilde{\mathbf{Y}}_{n_t}) 
+\mathcal{L}(\mathcal{G}^R(\mathbf{X}_{n_t}), \tilde{\mathbf{Y}}_{n_t}),
\end{aligned}
\label{loss1}
\end{equation}
\begin{equation}
\begin{aligned}
\mathcal{L}^{NR}\defeq& \sum_{t=1}^T \mathcal{L}(\sigma(\tilde{\mathbf{F}}_{n_t}^{f2c(t)}), \tilde{\mathbf{Y}}_{n_t}) +\\ \sum_{t=1}^T\mathcal{L}(&\sigma(\tilde{\mathbf{F}}_{n_t}^{c2f(t)}), \tilde{\mathbf{Y}}_{n_t}) 
+\mathcal{L}(\mathcal{G}^{NR}(\mathbf{X}_{n_t}), \tilde{\mathbf{Y}}_{n_t}),
\end{aligned}
\label{loss2}
\end{equation}
where $\sigma(\cdot)$ represents the Sigmoid function. Please notice that, for the recurrent network, $T$ represents the total number of recurrences, while for the non-recurrent one, $T$ is the amount of different scales, as shown in Fig. \ref{arch}. 


\begin{figure*}[t]
    \centering
    \includegraphics[width=0.19\linewidth, frame]{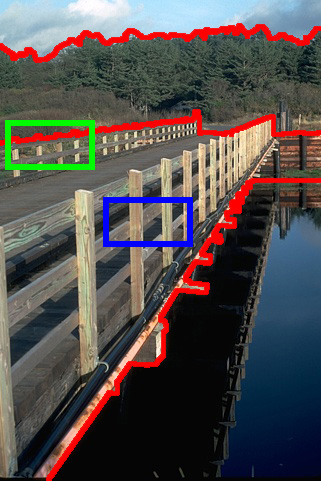}
    \includegraphics[width=0.19\linewidth, frame]{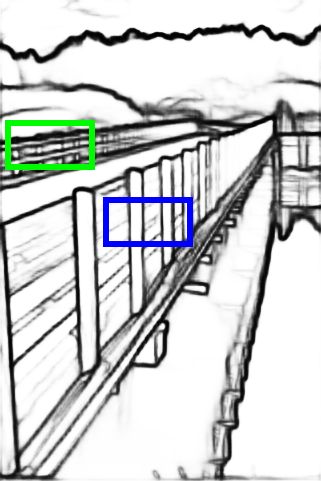}
    \includegraphics[width=0.19\linewidth, frame]{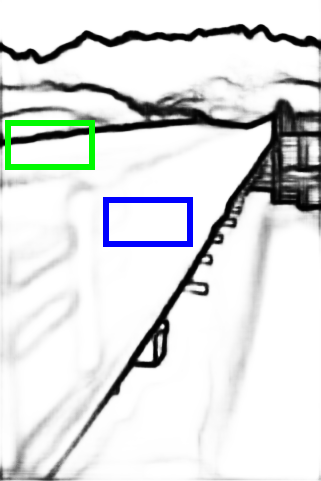}
    \includegraphics[width=0.19\linewidth, frame]{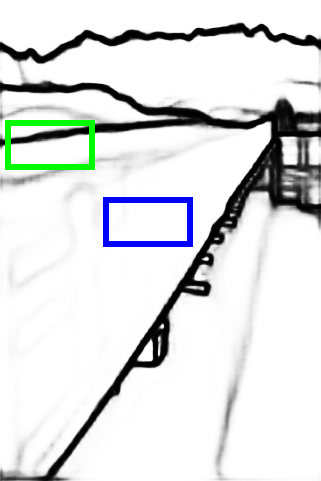}
    \includegraphics[width=0.19\linewidth, frame]{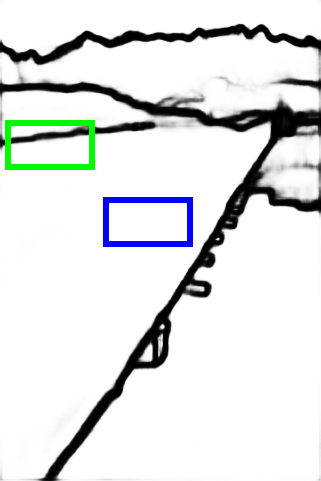}

    \includegraphics[width=0.192\linewidth]{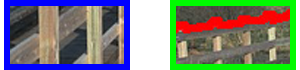}
    \includegraphics[width=0.192\linewidth]{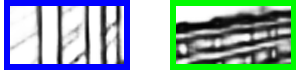}
    \includegraphics[width=0.192\linewidth]{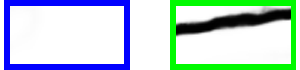}
    \includegraphics[width=0.192\linewidth]{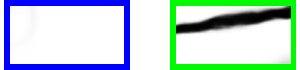}
    \includegraphics[width=0.192\linewidth]{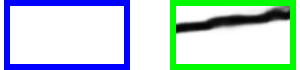}

    \includegraphics[width=0.19\linewidth, frame]{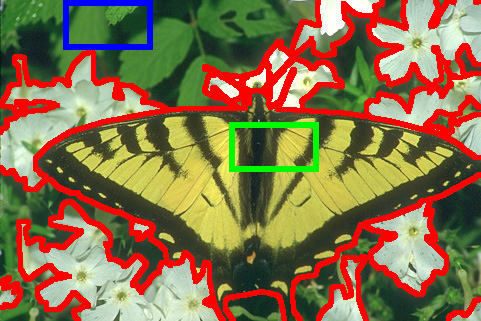}
    \includegraphics[width=0.19\linewidth, frame]{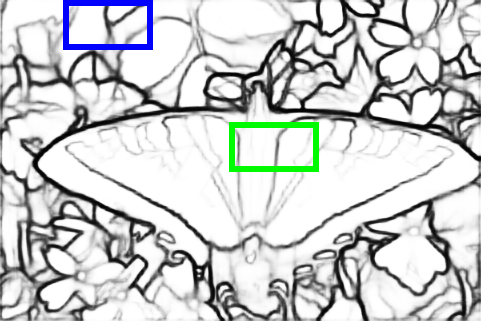}
    \includegraphics[width=0.19\linewidth, frame]{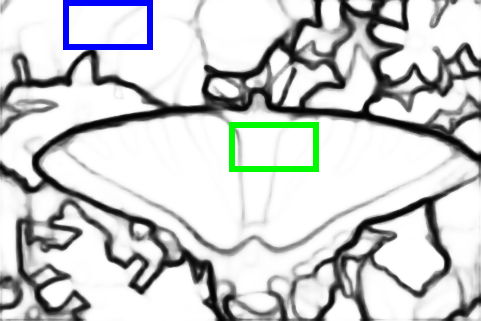}
    \includegraphics[width=0.19\linewidth, frame]{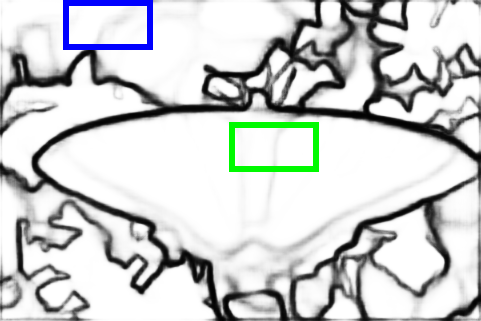}
    \includegraphics[width=0.19\linewidth, frame]{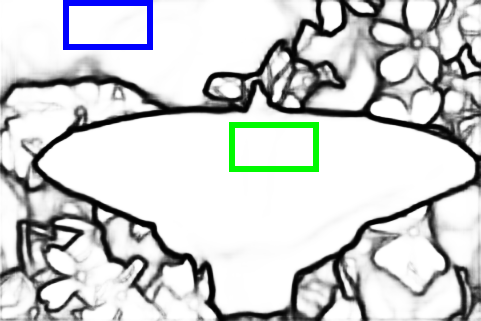}

    \includegraphics[width=0.192\linewidth]{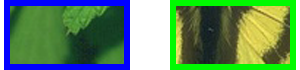}
    \includegraphics[width=0.192\linewidth]{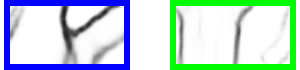}
    \includegraphics[width=0.192\linewidth]{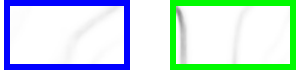}
    \includegraphics[width=0.192\linewidth]{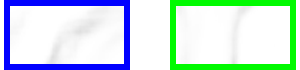}
    \includegraphics[width=0.192\linewidth]{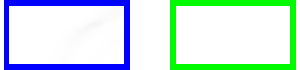}

    \includegraphics[width=0.19\linewidth, frame]{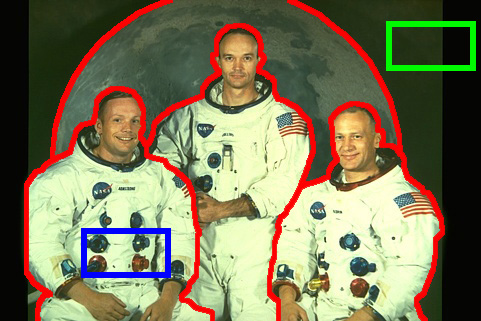}
    \includegraphics[width=0.19\linewidth, frame]{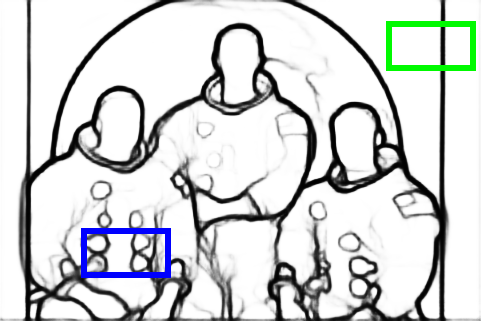}
    \includegraphics[width=0.19\linewidth, frame]{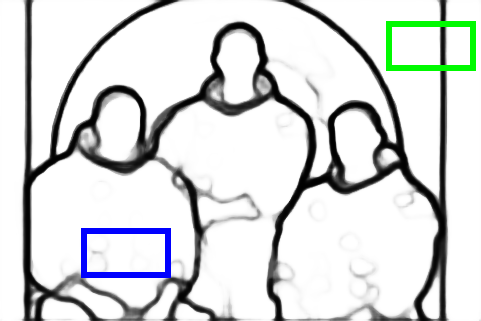}
    \includegraphics[width=0.19\linewidth, frame]{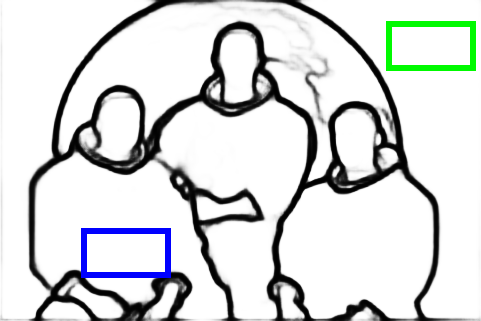}
    \includegraphics[width=0.19\linewidth, frame]{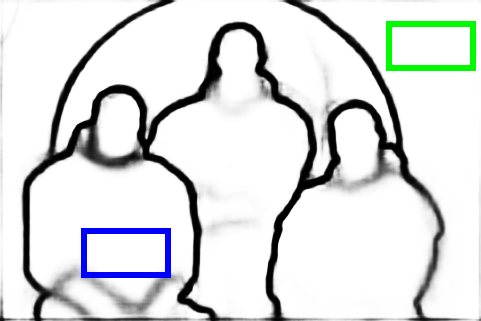}

    \includegraphics[width=0.192\linewidth]{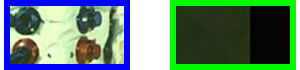}
    \includegraphics[width=0.192\linewidth]{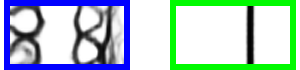}
    \includegraphics[width=0.192\linewidth]{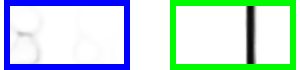}
    \includegraphics[width=0.192\linewidth]{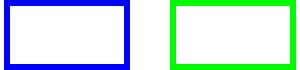}
    \includegraphics[width=0.192\linewidth]{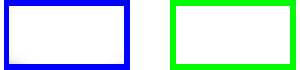}

    \quad\quad\; \small{Input\&GT} \quad\quad\quad\quad\quad\quad\quad   \small{PEdger} \quad\quad\quad\quad\quad\quad\quad\;   \small{PEdger++} \quad\quad\quad\;   \small{PEdger++ w/ VGG16} \quad \small{PEdger++ w/ ResNet50}
    
    \caption{Visual comparisons between our models with and without pre-training.}
    \label{com_pretrainandnopretrain}
\end{figure*}

\begin{table*}[t]
\centering
\small
\caption{Comparison on the Multicue dataset. All results are computed with single-scale inputs. $\mp$ represents the quantitative results of using our proposed efficient training strategy illustrated in \textbf{Algorithm} \ref{alg:algorithm2}. Best result are \textbf{Bolded}.}
\begin{tabular}{c|cc|cc}
  \toprule
 \multirow{2}{*}{Methods} & \multicolumn{2}{c|}{Edge} & \multicolumn{2}{c}{Boundary} \\
 & ODS-F & OIS-F & ODS-F & OIS-F \\
 \midrule
 Human & .750 (.024) & - & .760 (.017) & -\\
\midrule
Multicue & .830 (.002) & - & .720 (.014) & -\\
 HED \cite{HED} & .851 (.014) & .864 (.011) & .814 (.011) & .822 (.008) \\ 
 RCF \cite{RCF} & .857 (.004) & .862 (.004) & .817 (.004) & .825 (.005) \\
 BDCN \cite{BDCN_cvpr} & .891 (.001) & .898 (.002) & .836 (.001) & .846 (.003) \\
 DSCD \cite{DSCD} & .871 (.007) & .876 (.002) & .828 (.003) & .835 (.004) \\
 EDTER \cite{EDTER} & .894 (.005) & .900 (.003) & .861 (.003) & .870 (.004) \\
 DiffusionEdge \cite{Diffusionedge} & .904 (-) & .909 (-) & - & - \\
 UAED \cite{treasure} & .895 (.002) & .902 (.001) & .864 (.004) & .872 (.006) \\
 MuGE \cite{MuGE} & .898 (.004) & .900 (.004) & .875 (.006) & .879 (.006) \\
 \midrule
  PiDiNet \cite{PiDiNet_TPAMI} & .855(.007) & .860(.005) & .818(.003) & .830(.005) \\
  PEdger & .883(.008) & .892(.004) & .828(.005) & .839(.006) \\
\midrule
  PEdger++ & .901(.005) & .903(.006) & .866(.005) & .874(.006) \\
  PEdger++ w/ VGG16 & .905(.003) & .910(.004) & .873(.006) & .879(.004) \\
  PEdger++ w/ ResNet50 & .907(.006) & .913(.002) & .876(.003) & .881(.006) \\
  PEdger++Large w/ VGG16 & .908(.005) & .915(.007) & .875(.001) & .882(.005) \\
  PEdger++Large w/ ResNet50 & \textbf{.911(.006)} & \textbf{.916(.004)} & \textbf{.878(.002)} & \textbf{.884(.003)} \\
\midrule
  PEdger++$^{\mp}$ & .898(.003) & .901(.007) & .862(.006) & .869(.004) \\
  PEdger++ w/ VGG16$^{\mp}$ & .901(.004) & .905(.003) & .869(.007) & .874(.005) \\
  PEdger++ w/ ResNet50$^{\mp}$ & .902(.004) & .908(.005) & .871(.007) & .875(.003) \\
  PEdger++Large w/ VGG16$^{\mp}$ & .903(.003) & .910(.005) & .871(.004) & .878(.007) \\
  PEdger++Large w/ ResNet50$^{\mp}$ & .907(.005) & .911(.006) & .873(.004) & .879(.005) \\
\bottomrule
\end{tabular}
\label{multicue}
\end{table*}

\begin{table}[t]
\centering
\small
\caption{Quantitative results of different model sizes through adjusting the number of channels and the depth of network layers. All results are with single-scale inputs. }
\resizebox{0.48\textwidth}{!}{
\begin{tabular}{c|ccc|ccc}
  \toprule
 \multirow{2}{*}{Setting} & \multicolumn{3}{c|}{PEdger} & \multicolumn{3}{c}{PEdger++} \\
 & ODS-F & OIS-F & Params. & ODS-F & OIS-F & Params. \\
 \midrule
\multicolumn{7}{c}{BSDS w/o VOC} \\
 \midrule
 Tiny & .801 & .822 & 317K & .808 & .833 & 315K \\
 Small & .807 & .828 & 496K & .814 & .841 & 487K \\
 Normal & .813 & .834 & 734K & .830 & .846 & 716K\\
 Large & .819 & .840 & 4.0M & .841 & .852 & 4.3M \\
  \midrule
  \multicolumn{7}{c}{BSDS w/ VOC} \\
 \midrule
 Tiny & .810 & .826 & 317K & .819 & .837 & 315K \\
 Small & .815 & .832 & 496K & .828 & .844 & 487K \\
 Normal & .821 & .840 & 734K & .835 & .850 & 716K\\
 Large & .839 & .844 & 4.0M & .848 & .859 & 4.3M \\
  \midrule
\multicolumn{7}{c}{NYUD} \\
 \midrule
 Tiny & .725 & .736 & 317K & .737 & .752 & 315K \\
 Small & .736 & .753 & 496K & .758 & .761 & 487K \\
 Normal & .742 & .757 & 734K & .765 & .769 & 716K\\
 Large & .748 & .762 & 4.0M & .771 & .776 & 4.3M \\
  \midrule
\multicolumn{7}{c}{Multicue} \\
 \midrule
 Tiny & .858 & .871 & 317K & .877 & .884 & 315K \\
 Small & .875 & .886 & 496K & .890 & .897 & 487K \\
 Normal & .883 & .892 & 734K & .901 & .903 & 716K \\
 Large & .891 & .899 & 4.0M & .909 & .913 & 4.3M \\
\bottomrule
\end{tabular}}
\label{scalability}
\end{table}

\begin{figure*}[tp]
    \centering

    \includegraphics[width=0.143\linewidth,frame]{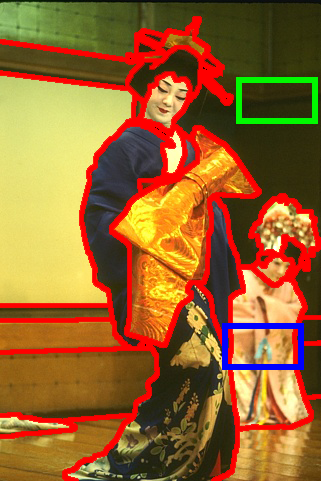}
    \includegraphics[width=0.143\linewidth,frame]{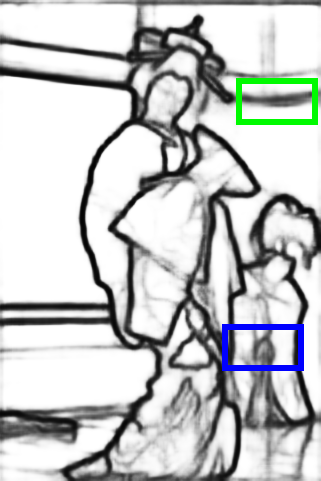}
    \includegraphics[width=0.143\linewidth,frame]{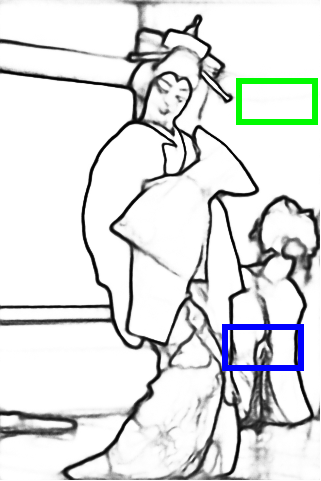}
    \includegraphics[width=0.143\linewidth,frame]{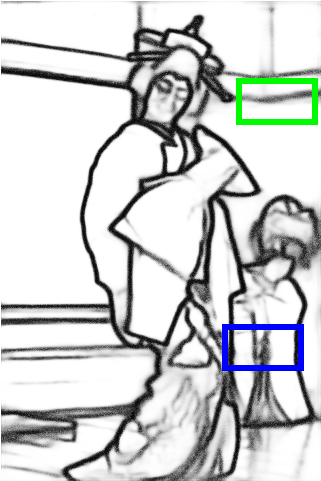}    
    \includegraphics[width=0.143\linewidth,frame]{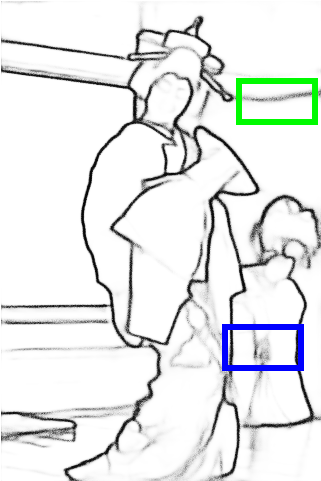}
    \includegraphics[width=0.143\linewidth,frame]{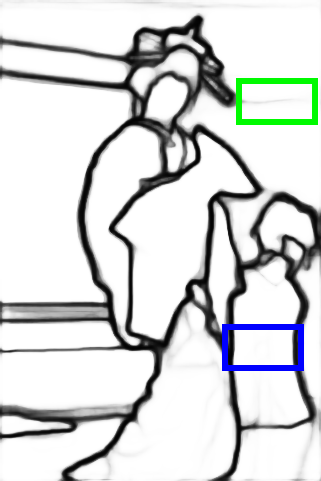}
\\ 

    \includegraphics[width=0.145\linewidth]{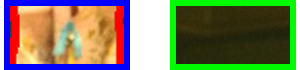}
    \includegraphics[width=0.145\linewidth]{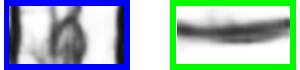}
    \includegraphics[width=0.145\linewidth]{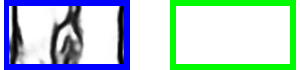}
    \includegraphics[width=0.145\linewidth]{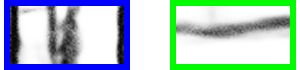}    
    \includegraphics[width=0.145\linewidth]{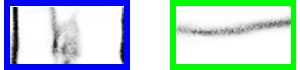}
    \includegraphics[width=0.145\linewidth]{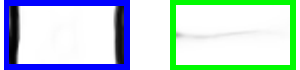}
\\
    
    \includegraphics[width=0.143\linewidth,frame]{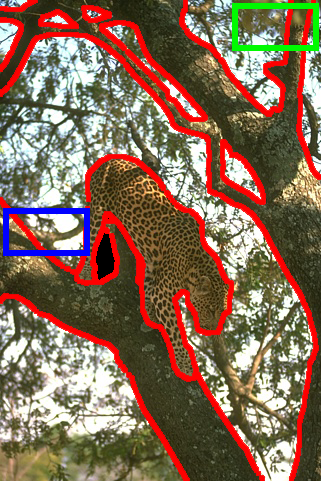}
    \includegraphics[width=0.143\linewidth,frame]{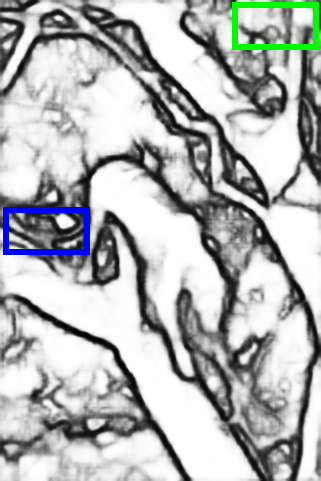}
    \includegraphics[width=0.143\linewidth,frame]{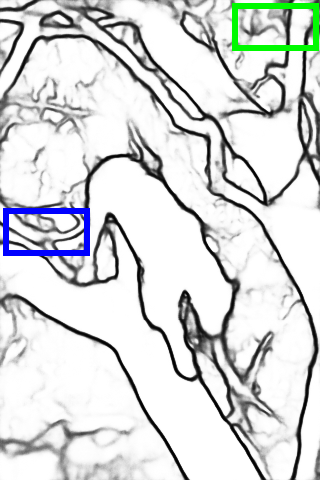}
    \includegraphics[width=0.143\linewidth,frame]{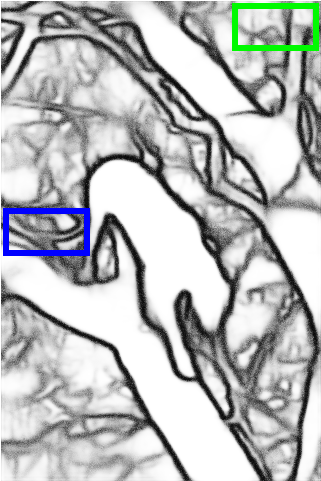}    
    \includegraphics[width=0.143\linewidth,frame]{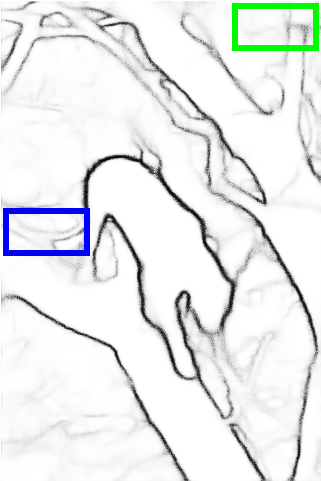}
    \includegraphics[width=0.143\linewidth,frame]{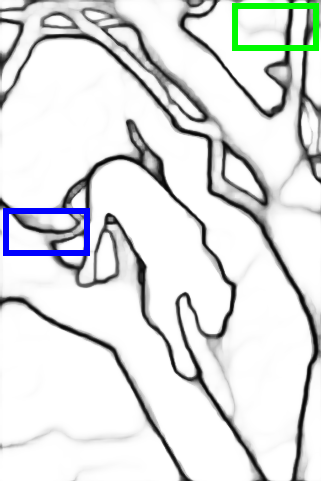}
\\ 

    \includegraphics[width=0.145\linewidth]{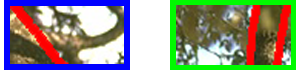}
    \includegraphics[width=0.145\linewidth]{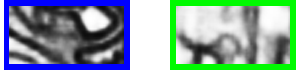}
    \includegraphics[width=0.145\linewidth]{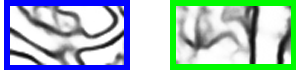}
    \includegraphics[width=0.145\linewidth]{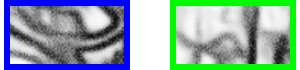}    
    \includegraphics[width=0.145\linewidth]{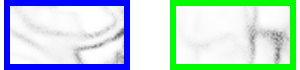}
    \includegraphics[width=0.145\linewidth]{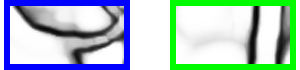}
\\ 

    \includegraphics[width=0.143\linewidth,frame]{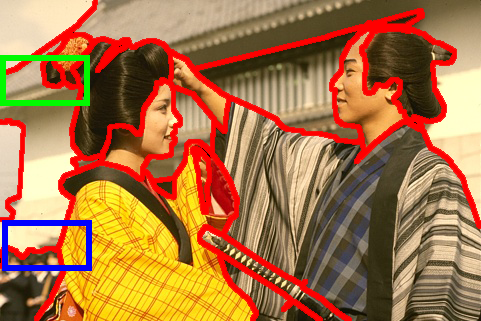}
    \includegraphics[width=0.143\linewidth,frame]{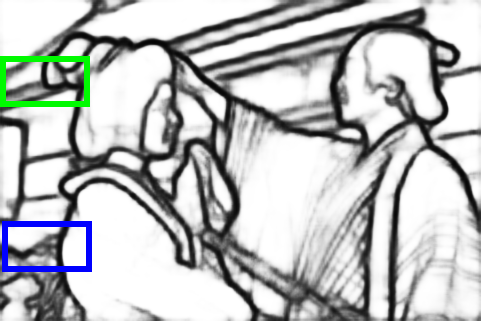}
    \includegraphics[width=0.143\linewidth,frame]{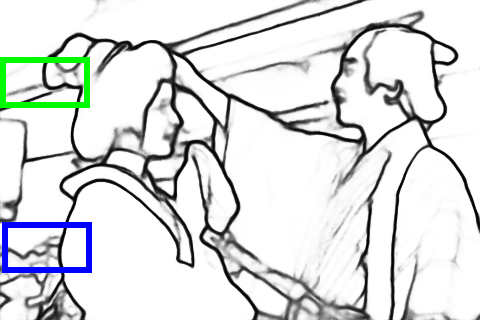}
    \includegraphics[width=0.143\linewidth,frame]{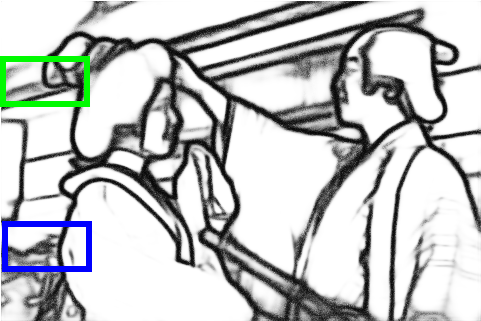}
    \includegraphics[width=0.143\linewidth,frame]{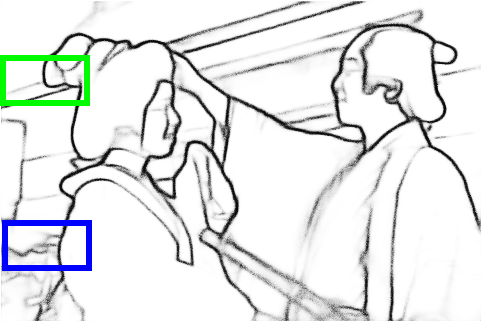}
    \includegraphics[width=0.143\linewidth,frame]{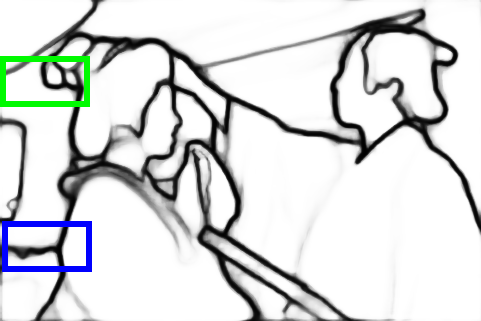}
\\

    \includegraphics[width=0.145\linewidth]{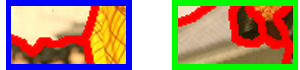}
    \includegraphics[width=0.145\linewidth]{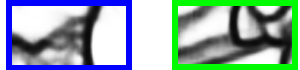}
    \includegraphics[width=0.145\linewidth]{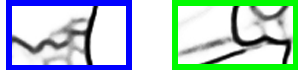}
    \includegraphics[width=0.145\linewidth]{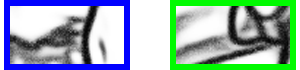}    
    \includegraphics[width=0.145\linewidth]{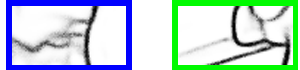}
    \includegraphics[width=0.145\linewidth]{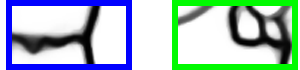}
\\
    
       \; Input\&GT \quad\quad\quad\quad  BDCN \quad\quad\quad\quad\quad EDTER \quad\quad\quad\quad\quad   UAED \quad\quad\quad\quad\quad  MuGE \quad\quad\quad\quad\quad Ours \quad\quad\quad
       
    \caption{Visual comparisons with competitors pre-trained on ImageNet. Our results are obtained by PEdger++ w/ ResNet50.}
    \label{com_pretrain}
\quad

\centering

    \includegraphics[width=0.173\linewidth,frame]{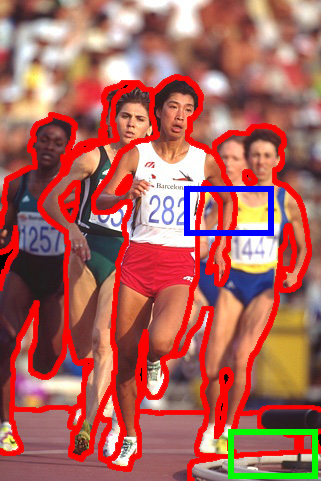}
    \includegraphics[width=0.173\linewidth,frame]{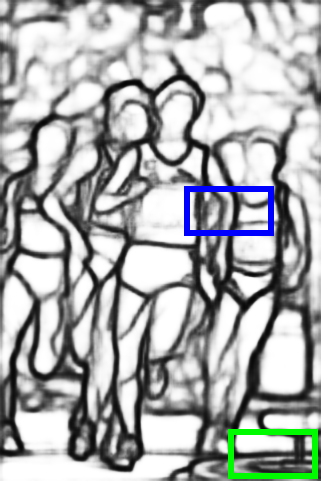}
    \includegraphics[width=0.173\linewidth,frame]{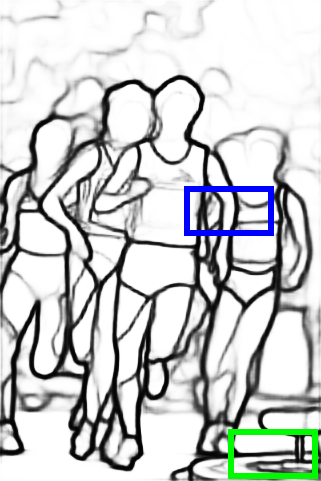}
    \includegraphics[width=0.173\linewidth,frame]{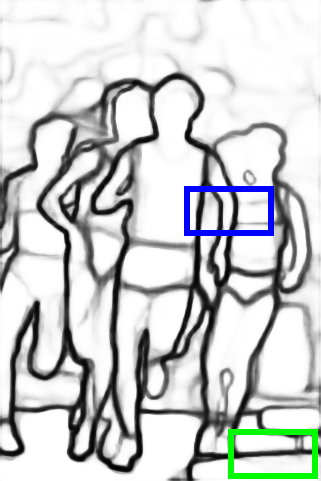}
    \includegraphics[width=0.173\linewidth,frame]{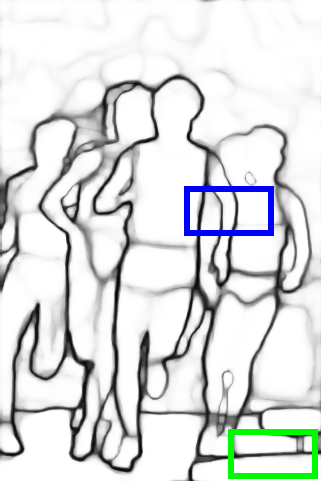}
\\

    \includegraphics[width=0.175\linewidth]{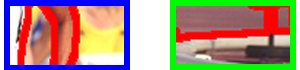}
    \includegraphics[width=0.175\linewidth]{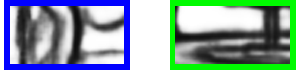}
    \includegraphics[width=0.175\linewidth]{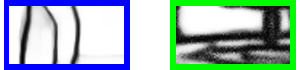}    
    \includegraphics[width=0.175\linewidth]{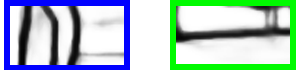}
    \includegraphics[width=0.175\linewidth]{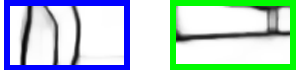}
\\

    \includegraphics[width=0.173\linewidth,frame]{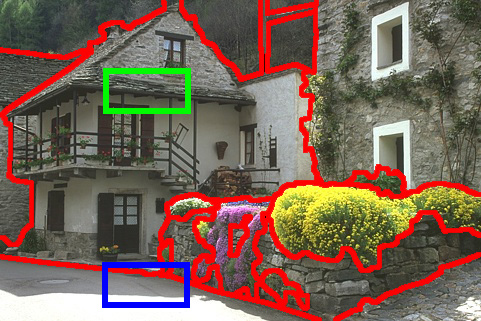}
    \includegraphics[width=0.173\linewidth,frame]{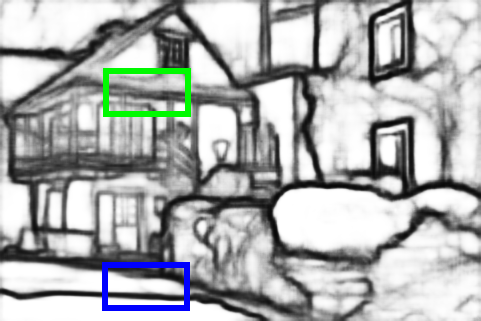}
    \includegraphics[width=0.173\linewidth,frame]{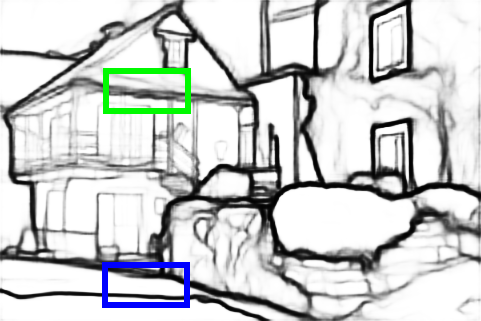}
    \includegraphics[width=0.173\linewidth,frame]{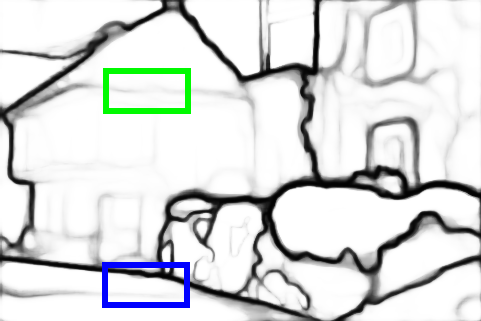}
    \includegraphics[width=0.173\linewidth,frame]{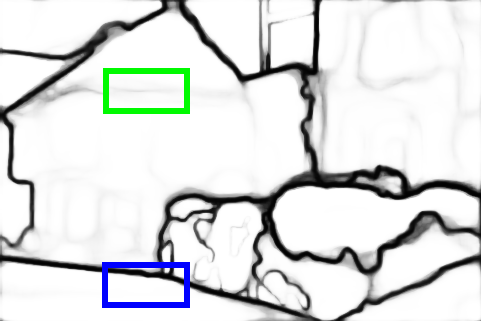}
\\

    \includegraphics[width=0.175\linewidth]{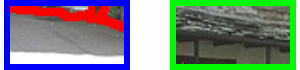}
    \includegraphics[width=0.175\linewidth]{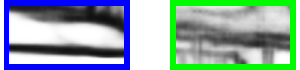}
    \includegraphics[width=0.175\linewidth]{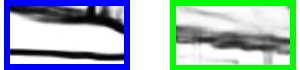}    
    \includegraphics[width=0.175\linewidth]{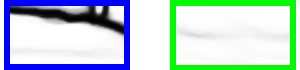}
    \includegraphics[width=0.175\linewidth]{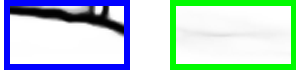}
\\

    \includegraphics[width=0.173\linewidth,frame]{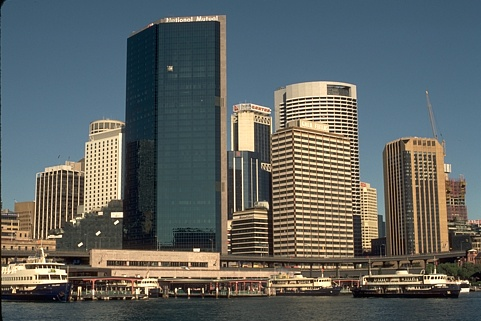}
    \includegraphics[width=0.173\linewidth,frame]{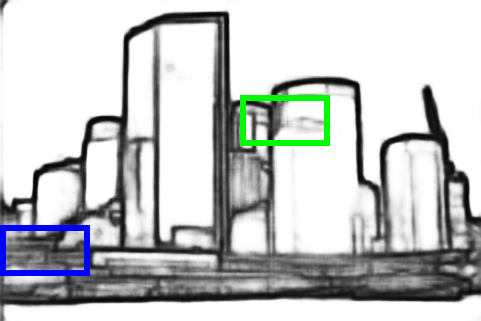}
    \includegraphics[width=0.173\linewidth,frame]{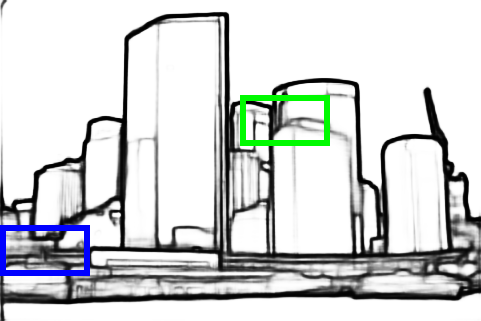}
    \includegraphics[width=0.173\linewidth,frame]{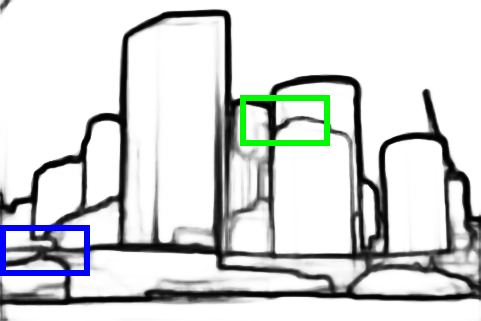}
    \includegraphics[width=0.173\linewidth,frame]{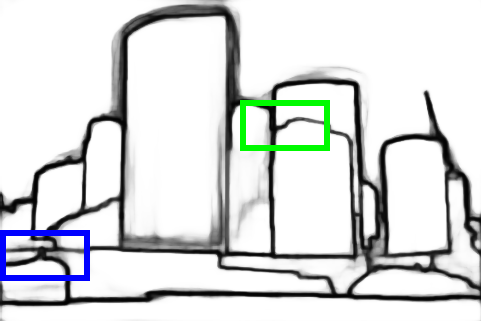}
\\

    \includegraphics[width=0.175\linewidth]{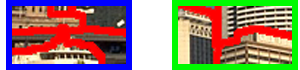}
    \includegraphics[width=0.175\linewidth]{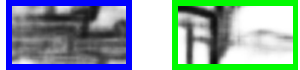}
    \includegraphics[width=0.175\linewidth]{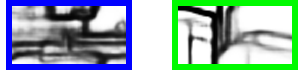}    
    \includegraphics[width=0.175\linewidth]{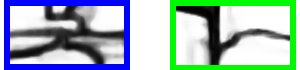}
    \includegraphics[width=0.175\linewidth]{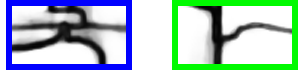}
\\
    
       \;\; Input\&GT \quad\quad\quad\quad\quad\quad PiDiNet \quad\quad\quad\quad\quad\quad   PEdger \quad\quad\quad\quad\quad\quad  PEdger++ \quad\quad\quad PEdger++Large 

    \caption{Visual comparisons with methods in the absence of pre-training.}
    \label{com}
\end{figure*}

\begin{figure}[t]
    \centering
    \subfloat[Input\&GT]{
    \includegraphics[width=0.31\linewidth, frame]{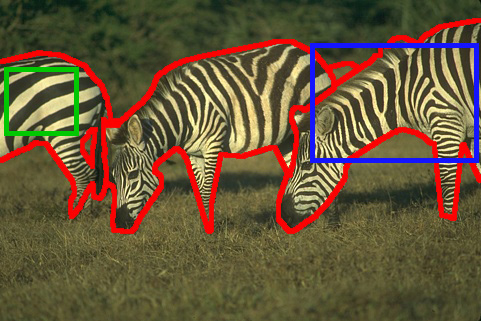}
    }
    \subfloat[Baseline]{
    \includegraphics[width=0.31\linewidth, frame]{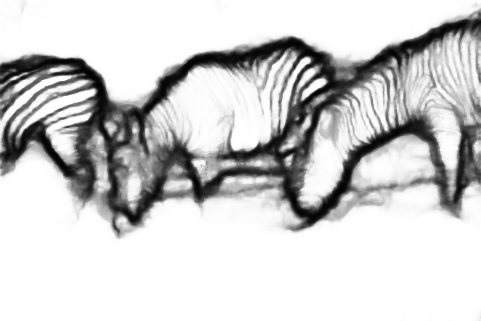}
    }
    \subfloat[NISMP]{
    \includegraphics[width=0.31\linewidth, frame]{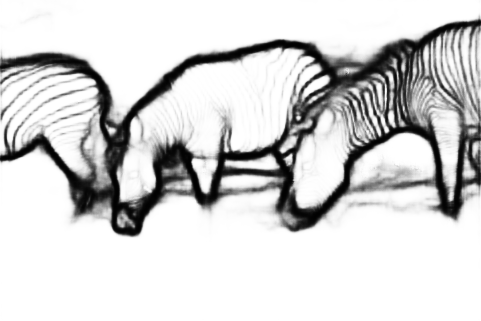}
    }

    \subfloat[EADS]{
    \includegraphics[width=0.31\linewidth, frame]{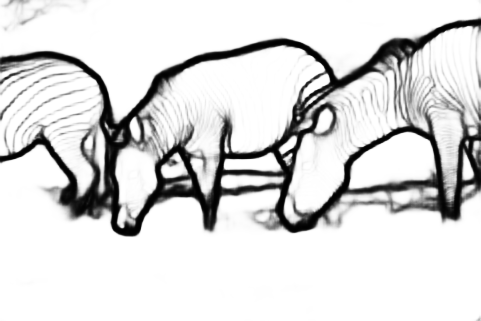}
    }
    \subfloat[EADM]{
    \includegraphics[width=0.31\linewidth, frame]{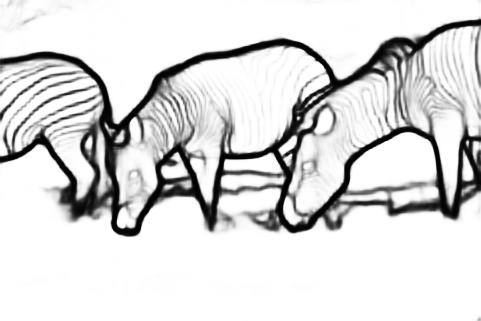}
    }
    \subfloat[EADP]{
    \includegraphics[width=0.31\linewidth, frame]{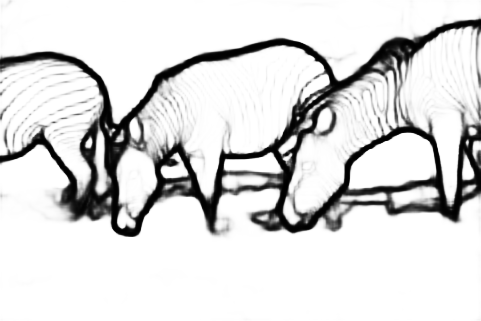}
    }

    \subfloat[ELHS]{
    \includegraphics[width=0.31\linewidth, frame]{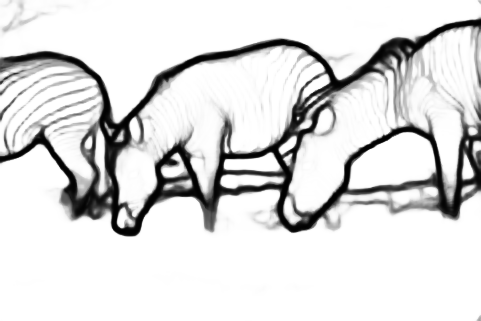}
    }
    \subfloat[Synchron.]{
    \includegraphics[width=0.31\linewidth, frame]{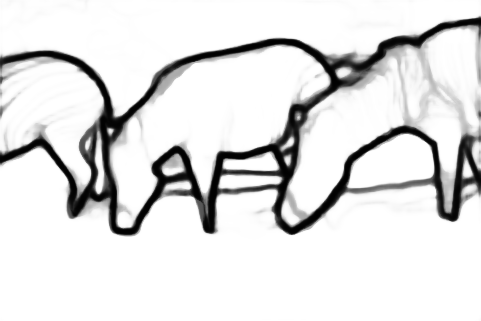}
    }
    \subfloat[Ours]{
    \includegraphics[width=0.31\linewidth, frame]{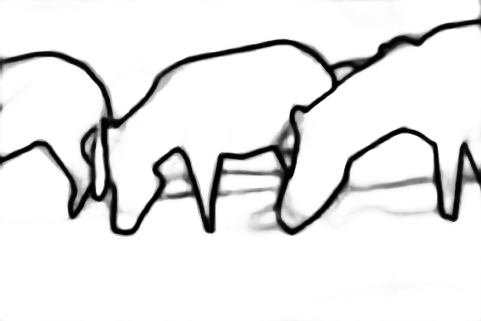}
    }
    
    \caption{Visual comparison between different collaborative learning strategies. Our result is generated by PEdger++ w/ ResNet50, and all the alternatives use pre-trained ResNet50 as backbone.}
    \label{ablation}
\end{figure}

\section{Experimental Validation}
\subsection{Implementation Details}
\label{details}
During training, we employ the SGD optimizer with hyper-parameters configured as follows: the momentum term is set to 0.9, and the weight decay coefficient is 0.001. A warm-up strategy is applied over the first 4 epochs, gradually increasing the learning rate to a peak of 0.001. After the warm-up, the learning rate decreases according to a linear decay schedule. The parameter $\lambda$ in Eq. \eqref{eq:ab} is set to 1.1 for the BSDS and Multicue datasets, and 1.3 for the NYUD. For the recurrent network, $T\defeq 5$ in Eq. \eqref{loss1} representing 5 recurrences, while for the non-recurrent counterpart, $T\defeq 4$ in Eq. \eqref{loss2} corresponding to 4 different scales.  To generate binary edge maps, a threshold of 0.2 is applied to BSDS and Multicue (for binarizing ground truths), while no threshold is needed for NYUD, as images are singly annotated.  The batch size is set to 16, implemented by gradient accumulation due to varying image sizes during training. Data augmentation includes random adjustments to brightness, contrast, saturation, and hue of input images (ranging from 50\% to 150\% of original values), and a 20\% chance of converting RGB to grayscale. During evaluation, standard non-maximum suppression (NMS) \cite{se} is applied to thin detected edges, and both the Optimal Dataset Scale (ODS-F) and Optimal Image Scale (OIS-F) F-scores, together with running speed and model size, are reported. Note that the Sigmoid activation is replaced with $\exp(x-0.50)/(\exp(x-0.5) + \exp(-x+0.5))$ in testing.

\begin{table}[t]
\centering
\caption{Quantitative comparisons among various Bayesian deep learning methods, on the BSDS dataset.}
\begin{tabular}{c|cc}
  \toprule
  Methods & ODS-F & OIS-F  \\
\midrule
\multicolumn{3}{c}{w/o VOC} \\
 \midrule
Robust Bayesian \cite{8186228} & .830 & .845  \\
Non-parametric Ensemble \cite{bayesian4}  & .838 & .849  \\
SUDF \cite{bayesian3} & .833 & .846  \\
Sub-network Inference  \cite{subnetwork} & .839 & .850  \\
SeNeVA  \cite{10657864} & .828 & .840  \\
\midrule
Ours & .841 & .852  \\
 \midrule
\multicolumn{3}{c}{w/ VOC} \\
 \midrule
Robust Bayesian \cite{8186228} & .834 & .844  \\
Non-parametric Ensemble \cite{bayesian4}  & .837 & .846  \\
SUDF \cite{bayesian3} & .831 & .843  \\
Sub-network Inference  \cite{subnetwork} & .841 & .852  \\
SeNeVA \cite{10657864} & .832 & .841  \\
\midrule
Ours & .846 & .856  \\
\bottomrule
\end{tabular}
\label{BayesianCom}
\end{table} 

\begin{table*}[t]
\centering
\caption{Ablation study on the effectiveness of the proposed collaborative learning strategy on the BSDS dataset in terms of ODS-F. All the results are computed with single-scale inputs.}
\begin{tabular}{c|cc|c|cc||c|cc|c|cc}
  \toprule
 \multicolumn{6}{c||}{With Pre-training} & \multicolumn{6}{c}{Without Pre-training} \\
 \midrule
Method & ODS-F & OIS-F & Method & ODS-F & OIS-F & Method & ODS-F & OIS-F & Method & ODS-F & OIS-F  \\
 \midrule
\multicolumn{12}{c}{BSDS w/o VOC} \\
\midrule
  Baseline  & .809 & .820 & EADP & .831 & .842 & Baseline  & .806 & .818 & EADP & .822 & .838 \\
  NISMP  & .819 & .836 &  MLHS & .832 & .844 & NISMP  & .817 & .835 &  MLHS & .820 & .837  \\
  EADS  & .825 & .841 &  Synchron. & .833 & .846 & EADS  & .820 & .838 &  Synchron. & .824 & .840 \\
  EADM  & .830 & .843 &  Ours & .841 & .852  & EADM  & .823 & .840 &  Ours & .830 & .846  \\
\midrule
\multicolumn{12}{c}{BSDS w/ VOC} \\
\midrule
  Baseline  & .820 & .830 & EADP & .835 & .848 & Baseline  & .811 & .824 & EADP & .827 & .843 \\
  NISMP  & .832 & .845 &  MLHS & .837 & .849 & NISMP  & .823 & .839 &  MLHS & .826 & .841  \\
 EADS  & .837 & .851 &  Synchron. & .839 & .850  &  EADS  & .825 & .842 &  Synchron. & .829 & .846 \\
  EADM  & .836 & .851 &  Ours & .846 & .856   & EADM  & .828 & .844 &  Ours & .835 & .850  \\
\bottomrule
\end{tabular}
\label{ablation}
\end{table*}

\begin{table*}[t]
\centering
\caption{Ablation study on the effectiveness of using the validation set during training, on the BSDS dataset in terms of ODS-F. All results are computed with single-scale inputs.} 
\begin{tabular}{c|cc|c|cc||c|cc|c|cc}
  \toprule
\multicolumn{6}{c||}{With Pre-training} & \multicolumn{6}{c}{Without Pre-training} \\
 \midrule
Method & ODS-F & OIS-F & Method & ODS-F & OIS-F & Method & ODS-F & OIS-F & Method & ODS-F & OIS-F \\
 \midrule
\multicolumn{12}{c}{BSDS w/o VOC} \\
\midrule
 Addition  & .829 & .843 & Confidence & .830 & .845 &  Addition  & .821 & .835 & Confidence & .822 & .837 \\
  CrossEntropy  & .827 & .840 &  \textbf{Ours} & \textbf{.841} & \textbf{.852} & CrossEntropy  & .819 & .833 &  \textbf{Ours} & \textbf{.830} & \textbf{.846}  \\
  \midrule
\multicolumn{12}{c}{BSDS w/ VOC} \\
\midrule
  Addition  & .831 & .842 & Confidence & .833 & .846 & Addition  & .828 & .842 & Confidence & .826 & .840 \\
  CrossEntropy  & .835 & .849 &  \textbf{Ours} & \textbf{.846} & \textbf{.856} & CrossEntropy  & .825 & .837 &  \textbf{Ours} & \textbf{.835} & \textbf{.850}  \\
\bottomrule
\end{tabular}
\label{validation}
\end{table*}

\begin{table*}[t]
\centering
\caption{Ablation study on the effectiveness of the two network architectures involved in training, on the BSDS dataset in terms of ODS-F. All results are computed with single-scale input. $\dagger$ denotes the non-recurrent structure whose number of parameters decreases as the resolution of features decreases.}
\begin{tabular}{cc|cc||cc|cc}
  \toprule
\multicolumn{4}{c||}{With Pre-training} & \multicolumn{4}{c}{Without Pre-training} \\
 \midrule
 \multicolumn{2}{c|}{Network 1} & \multicolumn{2}{c||}{Network 2} & \multicolumn{2}{c|}{Network 1} & \multicolumn{2}{c}{Network 2} \\
 \midrule
Architecture & ODS-F & Architecture  & ODS-F & Architecture & ODS-F & Architecture  & ODS-F  \\
 \midrule
\multicolumn{8}{c}{BSDS w/o VOC} \\
\midrule
 Recurrent  & .821 & Recurrent & .824 & Recurrent  & .819 & Recurrent & .821 \\
 Non-Recurrent & .827 & Non-Recurrent & .830 & Non-Recurrent & .822 & Non-Recurrent & .826  \\
 Recurrent & .832 & Non-Recurrent$\dagger$ & .834 & Recurrent & .820 & Non-Recurrent$\dagger$ & .823 \\
 \textbf{Recurrent} & \textbf{.837} & \textbf{Non-Recurrent} & \textbf{.841} &  \textbf{Recurrent} & \textbf{.827} & \textbf{Non-Recurrent} & \textbf{.830}  \\
 \midrule
\multicolumn{8}{c}{BSDS w/ VOC} \\
\midrule
 Recurrent  & .833 & Recurrent & .836 & Recurrent  & .825 & Recurrent & .827  \\
 Non-Recurrent & .837 & Non-Recurrent & .838 & Non-Recurrent & .827 & Non-Recurrent & .830 \\
 Recurrent & .835 & Non-Recurrent$\dagger$ & .839 & Recurrent & .824 & Non-Recurrent$\dagger$ & .828 \\
 \textbf{Recurrent} & \textbf{.840} & \textbf{Non-Recurrent} & \textbf{.846} &  \textbf{Recurrent} & \textbf{.831} & \textbf{Non-Recurrent} & \textbf{.835}\\
\bottomrule
\end{tabular}
\label{netarch}
\end{table*} 

\begin{table*}[t]
\centering
\caption{Parameter study on the BSDS dataset. The best results are highlighted in \textbf{bold}.}
\begin{tabular}{c|cc|c|cc||c|cc|c|cc}
\toprule
\multicolumn{6}{c||}{With Pre-training} & \multicolumn{6}{c}{Without Pre-training} \\
\midrule
Setting & ODS-F & OIS-F & Setting & ODS-F & OIS-F & Setting & ODS-F & OIS-F & Setting & ODS-F & OIS-F \\
\midrule
\multicolumn{12}{c}{BSDS w/o VOC} \\
\midrule
$T \defeq 3$ & .828 & .843 & $\eta_{J} \defeq 0.4$ & .833 & .846 & $T \defeq 3$ & .821 & .837 & $\eta_{J} \defeq 0.4$ & .824 & .837 \\
$T \defeq 4$ & .831 & .846 & $\eta_{J} \defeq 0.6$ & .835 & .849  & $T \defeq 4$ & .823 & .841 & $\eta_{J} \defeq 0.6$ & .829 & .843 \\
$T \defeq 5$ & \textbf{.841} & \textbf{.852} & $\eta_{J} \defeq 0.8$ & \textbf{.841} & \textbf{.852} & $T \defeq 5$ & \textbf{.830} & \textbf{.846} & $\eta_{J} \defeq 0.8$ & \textbf{.830} & \textbf{.846}\\
$T \defeq 6$ & .834 & .848 & $\eta_{J} \defeq 1$ & .838 & .851 & $T \defeq 6$ & .828 & .843 & $\eta_{J} \defeq 1$ & .831 & .846\\
\midrule
$S \defeq 1$ & .830 & .845 & $N_v \defeq 5\% N$ & .831 & .849 & $S \defeq 1$ & .822 & .836 & $N_v \defeq 5\% N$ & .820 & .834 \\
$S \defeq 2$ & .832 &  .850 & $N_v \defeq 10\% N$ & .836 & .851 & $S \defeq 2$ & .826 &  .842 & $N_v \defeq 10\% N$ & .825 & .840 \\
$S \defeq 3$ & \textbf{.841} & \textbf{.852} & $N_v \defeq 20\% N$ & .838 & .850  & $S \defeq 3$ & \textbf{.830} & .846 & $N_v \defeq 20\% N$ & \textbf{.831} & .844 \\
$S \defeq 4$ & .835 & .846 & $N_v \defeq 30\% N$ & \textbf{.841} & \textbf{.852}  & $S \defeq 4$ & .829 & \textbf{.843} & $N_v \defeq 30\% N$ & .830 & \textbf{.846} \\
$S \defeq 5$ & .833 & .841 & $N_v \defeq 40\% N$ & .837 & .848  & $S \defeq 5$ & .824 & .839 & $N_v \defeq 40\% N$ & .826 & .840 \\
\midrule
\multicolumn{12}{c}{BSDS w/ VOC} \\
\midrule
$T \defeq 3$ & .835 & .848 & $\eta_{J} \defeq 0.4$ & .839 & .852 & $T \defeq 3$ & .826 & .840 & $\eta_{J} \defeq 0.4$ & .830 & .843 \\
$T \defeq 4$ & .837 & .851 & $\eta_{J} \defeq 0.6$ & .841 & .853 & $T \defeq 4$ & .831 & .845 & $\eta_{J} \defeq 0.6$ & .834 & .842 \\
$T \defeq 5$ & \textbf{.846} & \textbf{.856} & $\eta_{J} \defeq 0.8$ & \textbf{.846} & \textbf{.856} & $T \defeq 5$ & \textbf{.835} & \textbf{.850} & $\eta_{J} \defeq 0.8$ & \textbf{.835} & \textbf{.850}\\
$T \defeq 6$ & .839 & .847 & $\eta_{J} \defeq 1$ & .843 & .854 & $T \defeq 6$ & .834 & .848 & $\eta_{J} \defeq 1$ & .833 & .846\\
\midrule
$S \defeq 1$ & .834 & .851 & $N_v \defeq 5\% N$ & .833 & .841 & $S \defeq 1$ & .827 & .842 & $N_v \defeq 5\% N$ & .826 & .837 \\
$S \defeq 2$ & .837 &  .854 & $N_v \defeq 10\% N$ & .835 & .848 & $S \defeq 2$ & .830 &  .846 & $N_v \defeq 10\% N$ & .828 & .844  \\
$S \defeq 3$ & \textbf{.846} & \textbf{.856} & $N_v \defeq 20\% N$ & .837 & .850 & $S \defeq 3$ & \textbf{.835} & .850 & $N_v \defeq 20\% N$ & .831 & .845 \\
$S \defeq 4$ & .840 & .853 & $N_v \defeq 30\% N$ & \textbf{.846} & \textbf{.856} & $S \defeq 4$ & .834 & \textbf{.851} & $N_v \defeq 30\% N$ & \textbf{.835} & \textbf{.850} \\
$S \defeq 5$ & .842 & .851 & $N_v \defeq 40\% N$ & .840 & .852 & $S \defeq 5$ & .832 & .847 & $N_v \defeq 40\% N$ & .832 & .846 \\
\bottomrule
\end{tabular}
\label{param}
\end{table*} 

\subsection{Datasets \& Competitors}
Three datasets employed for evaluation include:
1) The \textit{BSDS500} \cite{gpu-ucm} has 200 training, 100 validation and 200 testing images with labels annotated by 4 to 9 participants. Consistent with prior works, we train on two variants of training data: a combination of augmented BSDS and PASCAL VOC Context data, and the augmented BSDS dataset alone; and 2) The \textit{NYU Depth (NYUD)} dataset \cite{rgbd} contains 1449 paired RGB and HHA images, split into 381 training images, 414 validation images, and 654 testing images. Following previous works \cite{HED,EDTER,PiDiNet,PiDiNet_TPAMI,treasure,MuGE}, we adjust the maximum tolerance for correct edge prediction matches with ground truth to 0.011 during evaluation. 
3) The \textit{Multicue} dataset comprises 100 challenging scenes with both edge and boundary annotations. Each scene includes short left- and right-view sequences. Following previous studies \cite{RCF,PiDiNet,PiDiNet_TPAMI,EDTER,treasure,MuGE},  we apply augmentation by rotating images at $90^\circ$, $180^\circ$, and $270^\circ$ angles.

We benchmark our models against several traditional algorithms, including:  ISCRA \cite{ISCRA}, SE \cite{se}, and OEF \cite{oef}, as well as deep learning methods, including  CEDN \cite{CEDN}, HED \cite{HED}, COB \cite{COB}, CED \cite{CED}, AMH-Net \cite{AMH}, LPCB \cite{LPCB}, RCF \cite{RCF}, BDCN \cite{BDCN_cvpr}, DSCD \cite{DSCD}, PiDiNet \cite{PiDiNet_TPAMI}, LDC \cite{LDC}, FCL \cite{FCL},  EDTER \cite{EDTER}, DiffusionEdge \cite{Diffusionedge}, UAED \cite{treasure},  MuGE \cite{MuGE}, and SAUGE \cite{SAUGE}.

\subsection{Comparison with State-of-the-arts}
We compare our method against other state-of-the-art models, including those with and without pre-training, to demonstrate our superiority and showcase its robustness across diverse configurations. 

It can be seen from Tabs. \ref{BSDS_result1}, \ref{BSDS_result2} and \ref{BSDS_results_MS} together with Precision-Recall curves in Fig. \ref{pr} that,  our method achieves high detection accuracy under different computational complexities. Specifically, on the one hand,  in scenarios with pre-training, we utilize pre-trained VGG16 or ResNet50 as the encoding module (blue part in Fig. \ref{arch}) of the non-recurrent network.  To align with our PEdger++, only the first four stages of VGG16 or ResNet50 are adopted as the encoding part, which generates the consistent number of side-outputs. Owing to the difficulty of injecting pre-trained backbones directly into the recurrent structure, we opt to train two non-recurrent structures collaboratively, both of which use identical pre-trained backbones. As illustrated in Fig. \ref{com_pretrainandnopretrain}, compared to PEdger and PEdger++ without any pre-training, the use of pre-trained parameters enables the model to produce edges that are more perceptually meaningful, which underscores the value of pre-training in capturing high-level semantics. When comparing with the methods with pre-training on ImageNet \cite{ImageNet} or SA-1B \cite{sam},  our method reaches the state-of-the-art accuracy, and generates visually pleasing edge maps as illustrated in Tab. \ref{BSDS_result1} and \ref{BSDS_result2}, and Fig. \ref{com_pretrain}. For example, our PEdger++ w/ ResNet50 attains ODS-F and OIS-F of 0.846 and 0.856 when trained on the mixture of augmented BSDS and PASCAL data, respectively, while still maintaining a swift running speed. This performance surpasses not only the recently proposed UAED, but also EDTER, DiffusionEdge, and SAUGE that rely on computationally intensive ViT, diffusion models, and SAM \cite{sam}, respectively. Notice that both EDTER and DiffusionEdge require troublesome two-stage training, making them not so practical. Despite that PEdger++ w/ ResNet50 lags slightly behind MuGE, it is capable of exceeding MuGE with an expansion of the parameter amount to 12.7M (PEdger++Large w/ ResNet50) at a 10x faster running speed than MuGE.

On the other hand,  in scenarios without pre-training, we have a significant advantage in terms of both ODS-F and OIS-F scores,  compared with  PiDiNet and PEdger. Though our accuracy is lower than that of UAED, and MuGE, we can operate at a speed about 80$\times$ faster than them, and requires merely 716K parameters. When increasing the parameter amount to 4.3M, our model reaches higher accuracy than UAED. The visual results are presented in Fig. \ref{com}, wherein it is evident that our results show clean edges, and a reduction in incorrect predictions within the non-edge regions.  

When trained using our efficient version of collaborative learning algorithm as illustrated \textbf{Algorithm} \ref{alg:algorithm2}, the total training time can be greatly shortened. As given in Tab. \ref{rate}, this efficient training algorithm  evidently reduces the training time, with negligible accuracy degradation.  \emph{$\Delta$ ODS-F}, \emph{$\Delta$ OIS-F}, and \emph{$\Delta$ Training Time} in Tab. \ref{rate} are all  calculated through dividing the absolute value of the difference between the results of \textbf{Algorithm} \ref{alg:algorithm} and \textbf{Algorithm} \ref{alg:algorithm2}, by that of \textbf{Algorithm} \ref{alg:algorithm}.

The results on the NYUD and Multicue datasets are presented in Tab. \ref{NYUD} and Tab. \ref{multicue}, respectively. It can be seen that our approach attains satisfactory results that are on par with other state-of-the-art models, while also maintaining computational efficiency. It is worth mentioning that, we use identical network architecture, \textit{i.e.}, the same number of channels and network layers, across the BSDS, NYUD, and Multicue datasets.

\subsection{Network Scalability}
In this part, we perform scalability experiments to investigate the impacts of varying model sizes, which enables adaptations for a broad range of application scenarios. The configuration settings include: Tiny, Small, Normal, and Large. The model sizes are changed through tuning the number of channels and the depth of network layers. Comprehensive quantitative results on the BSDS, NYUD, and Multicue datasets are provided in Tab. \ref{scalability} to assess performance across different computational complexities.
As can be observed, reducing the model size generally leads to a decline in detection accuracy, as measured by both ODS-F and OIS-F scores. The Large settings significantly boosts accuracy even when trained from scratch, outperforming most contemporary deep edge detection models. Notably, the proposed PEdger++ consistently achieves higher accuracies than the previous PEdger. 

\subsection{Discussion on Bayesian Deep Learning}
To showcase the effectiveness of our Bayesian posterior approximation strategy, we compare our quantitative results with other Bayesian deep learning methods, including: 1) \emph{Robust Bayesian} \cite{8186228} that employs a large margin constraint to enhance the robustness in Bayesian learning; 2) \emph{Non-parametric Ensemble} \cite{bayesian4} that probes model parameters at different scales without manually tuning hyper-parameters; 3) \emph{SUDF} \cite{bayesian3} that performs Bayesian inference in both spatial and frequency domains; 4) \emph{Sub-network Inference} \cite{subnetwork}  that regards merely a small subset of parameters as probabilistic while keeping others deterministic; 5) \emph{SeNeVA} \cite{10657864} that assigns the balancing weights to each parameter samplings through a separate assignment network. We re-implement these above competing approaches within our framework to facilitate their direct application to edge detection. For fair comparison, the number of parameter samples $S$, total training epochs $J$, and other hyper-parameters, are consistent across different methods. 
It can be seen from Tab. \ref{BayesianCom} that, our proposed method for Bayesian posterior approximation, particularly when considering generalization to unseen data, exhibits a significant performance improvement over competing methods. This advantage highlights the robustness and reliability of our approach for edge detection, underscoring its potential for broader applications.

\subsection{Ablation Study}
We conduct the following ablation studies all on the mixture of augmented BSDS and PASCAL VOC \cite{PASCAL_VOC} dataset.

\textbf{Effectiveness of Assembling Cross Information.} We evaluate 6 configurations, including 1) \emph{Baseline.} A single non-recurrent model trained with original hard targets $\mathbf{Y}_{n_t}$; 2)  \emph{No Integration from Structures, Moments, and Parameter samplings (NISMP)}. Soft targets $\tilde{\mathbf{Y}}_{n_t}$ are generated solely from a single model trained up to the last epoch, without knowledge integration from different structures, training moments, or parameter samplings; 3) \emph{Ensemble Across Different Training Moments Only (EADM)}. Soft targets $\tilde{\mathbf{Y}}_{n_t}$ are generated using knowledge across training moments only. Here, only the non-recurrent momentum and back-propagation networks are used; 4) \emph{Ensemble Across Different Structures Only (EADS)}. Generating soft targets $\tilde{\mathbf{Y}}_{n_t}$ leverages knowledge from different network structures, without integration across training moments or parameter samplings. The knowledge learned from the last epoch is fused to obtain $\mathbf{M}_{n_t}$; 5) \emph{Ensemble Across Different Parameter Samplings Only (EADP)}. Soft targets $\tilde{\mathbf{Y}}_{n_t}$ are produced using multiple sampled parameters, without integration from structures or training moments. A single non-recurrent model provides the learned knowledge after the last epoch; and 6) \emph{Mutual Learning of Heterogeneous Structures (MLHS)}. The recurrent and non-recurrent network will mutually supervise each other. The recurrent model is supervised by the non-recurrent model and $\mathbf{\mathbf{Y}}_{n_t}$, while the non-recurrent model is supervised by the recurrent one and $\mathbf{\mathbf{Y}}_{n_t}$. In this setting, the knowledge across training moments and parameter samplings, is incorporated.

As shown in Tab. \ref{ablation}, the Baseline model exhibits the lowest accuracy among all alternatives, due to its lack of robust knowledge integration from diverse sources. NISMP, which replaces original targets $\mathbf{Y}_{n_t}$ with soft targets $\tilde{\mathbf{Y}}_{n_t}$, shows a slight improvement over the Baseline but still performs below the other configurations. In contrast, the models trained with targets $\tilde{\mathbf{Y}}_{n_t}$ generated by EADM, EADS, or EADP demonstrate notable performance gains, underscoring the importance of integrating knowledge from heterogeneous architectures, training moments, or multiple parameter samplings to enhance robustness. When combining knowledge from all three aspects (\emph{Ours}), the best results are achieved. Additionally, MLHS shows inferior performance compared to ours, indicating the benefit of fusing the predictions by recurrent and non-recurrent networks via Eq. \eqref{uncertainty}. Please refer to Fig. \ref{ablation} for visual results of ablation studies.

\textbf{Inferiority of Two-stage Training.} To assess the impact of two-stage training, we first maintain a momentum network with the recurrent structure and its corresponding back-propagation network, where $\tilde{\mathbf{Y}}_{n_t}$ is generated based solely on the predictions of the recurrent momentum network. In the second stage, only the momentum and back-propagation networks with the non-recurrent structure are updated, while the recurrent networks remain fixed.  The target $\tilde{\mathbf{Y}}_{n_t}$ used for supervision in the second stage is obtained according to Eqs. \eqref{uncertainty} and \eqref{soft}. The primary difference between this two-stage training setup and our collaborative learning lies in whether the recurrent network parameters are updated throughout. The quantitative results of two-stage training (\emph{Synchron.} in Tab. \ref{ablation}) are lower than those of ours. This outcome may be due to the fixed parameters in the recurrent networks during the second stage, which limits the performance gains achievable in the non-recurrent networks.

\textbf{Effectiveness of Incorporating a Validation Set.} To verify the necessity of introducing the validation set during training, we report the results of three alternatives that do not determine the influence weight according to the generalization ability on the validation set: 1) \emph{Addition}. This approach simply aggregates the predictions from multiple network parameters by summing them directly, without applying ensemble weights;
2) \emph{CrossEntropy}. This version selects the sampled parameters with the minimal cross entropy loss among all $S$ samplings; and
3) \emph{Confidence}. The sampled parameters are determined with respect to the maximal prediction confidence among all $S$ samplings.
All three alternatives described above are trained solely on the original training set $\mathcal{D}$, without partitioning off a validation set. As given in Tab. \ref{validation}, abandoning the validation set during training leads to a noticeable drop in detection accuracy.

\textbf{Effectiveness of Network Architecture Design.} We conduct experiments to examine how the choice of network architectures affects performance. As can be seen in the first two rows of Tab. \ref{netarch}, adopting identical network structures in training, \textit{e.g.}, all the networks having either a recurrent or non-recurrent structure, results in sub-optimal performance. This finding highlights that leveraging two distinct network structures (recurrent and non-recurrent in this study) is helpful for capturing diverse knowledge, which in turn enhances robustness and accuracy. 

Furthermore, as discussed in Section \ref{NetworkArchitecture}, the amount of parameters for the non-recurrent network should increase progressively as feature size decreases. To validate this design, we allocate more parameters to earlier stages with larger feature sizes, and fewer parameters to later stages for comparison. Besides being slower than our original design, this manner also yields lower accuracy for both recurrent and non-recurrent networks, as indicated by  $\dagger$ in Tab. \ref{netarch}. This is because later stages, which are responsible for learning higher-level semantic information, require more parameters than earlier stages, which focus on lower-level details. 

\subsection{Parameter Study}
This part presents the outcomes of tuning the key hyper-parameters and their impacts on performance. We can infer from Tab. \ref{param} that, if the total recurrent step $T$ is low, such as $T \defeq 3$ or $T \defeq 4$, the performance is degraded due to limited receptive fields, which restricts the model's capacity to capture long-range dependencies and contextual information effectively. Increasing $T$ to 6 offers no obvious accuracy gain but raises computational costs. Hence, we set $T \defeq 5$ for all benchmark datasets. We also evaluate the effect of  $\eta_{J}$, finding that optimal performance is achieved when $\eta_{J}\defeq 0.8$. If $\eta_{J}$ is lower, such as $\eta_{J} \defeq 0.4$ or $\eta_{J} \defeq 0.6$, the model utilizes the robust knowledge from diverse sources. Conversely, if $\eta_{J} \defeq 1.0$, the performance is sub-optimal, as the influence of manually annotated labels diminishes, causing the knowledge of $\mathbf{M}_{n_t}$ to be overly relied upon. For the parameter sampling count $S$, we set $S\defeq 3$, by considering the balance between performance gains and computational costs.

Alongside the previously discussed hyper-parameters, we further delve into the impact of the quantity of the validation set, denoted as $N_v$. As can be observed in Tab. \ref{param}, splitting 30\% of the training samples into the validation set yields the best results, while the results of other configurations are sub-optimal. Generally, a scarcity of samples in the validation set compromises the reliability of generalization measurement, whereas an excessive number of samples act as validating data results in the insufficient amount of training data.

\section{Conclusion}
This work presented a novel collaborative learning paradigm for edge detection, termed PEdger++, which tackles a core challenge in deep learning models: achieving high accuracy across varying levels of computational resources. From an ensemble perspective, the proposed edge detection framework integrates knowledge sourced from  heterogeneous network architectures, different training epochs, and multiple parameter samplings to enhance robustness and adaptability. PEdger++ offers flexibility in training, allowing models to be trained from scratch or fine-tuned from ImageNet, thus catering to diverse device capabilities. Extensive experiments have been conducted to demonstrate the effectiveness of PEdger++, outperforming other state-of-the-art methods and making it a compelling solution for edge detection in real-world applications with different computational constraints. This work marks an important advancement in edge detection, and 
potentially guides future research in efficient, scalable deep learning models.

\bibliographystyle{spmpsci}      
\bibliography{reference}   

%
%

\end{document}